\documentclass{article}


\usepackage[preprint]{neurips_2025}




\usepackage{amsthm}
\usepackage{tikz}
\usepackage[utf8]{inputenc} 
\usepackage[T1]{fontenc}    
\usepackage{hyperref}       
\usepackage{url}            
\usepackage{booktabs}       
\usepackage{amsfonts}       
\usepackage{wrapfig} 
\usepackage{subfigure}
\usepackage{enumitem}
\usepackage{nicefrac}       
\usepackage{microtype}      
\usepackage{xcolor}         
\usepackage{amsmath}
\usepackage{mathtools}
\usepackage{cleveref}
\usepackage{algorithm}
\usepackage{algorithmic}
\usepackage{dsfont}
\usepackage{booktabs}
\usepackage{multirow}
\usepackage{graphicx}
\usepackage{makecell}
\usepackage{etoc}

\newtheorem{theorem}{Theorem}
\newtheorem{assumption}{Assumption}
\newtheorem{definition}{Definition}
\newtheorem{remark}{Remark}
\newtheorem{proposition}{Proposition}
\newtheorem{lemma}{Lemma}

\def\gG{{\mathcal G}}

\title{Topology-Aware Conformal Prediction for Stream Networks}

%

\author{%
  Jifan Zhang \thanks{Equal Contribution.}\\
  Northwestern University\\
  Evanston, IL 60208 \\
  \texttt jifanzhang2026@u.northwestern.edu \\
  \And
  Fangxin Wang \footnotemark[1] \\
  University of Illinois Chicago \\
  Chicago, IL 60607 \\
  fwang51@uic.edu \\
  \AND
  Zihe Song \\
  University of Illinois Chicago \\
  Chicago, IL 60607 \\
  zsong29@uic.edu \\
  \And
  Philip Yu \\
  University of Illinois Chicago \\
  Chicago, IL 60607 \\
  psyu@uic.edu \\
  \And
  Kaize Ding \thanks{Co-corresponding Authors.}\\Northwestern University\\
  Evanston, IL 60208 \\
  \texttt kaize.ding@northwestern.edu\\
  \And
  Shixiang Zhu \footnotemark[2]\\Carnegie Mellon University\\ Pittsburgh, PA 15213\\\texttt shixiangzhu@cmu.edu
}

\begin{document}

\maketitle

\begin{abstract}
  Stream networks, a unique class of spatiotemporal graphs, exhibit complex directional flow constraints and evolving dependencies, making uncertainty quantification a critical yet challenging task. Traditional conformal prediction methods struggle in this setting due to the need for joint predictions across multiple interdependent locations and the intricate spatio-temporal dependencies inherent in stream networks. Existing approaches either neglect dependencies, leading to overly conservative predictions, or rely solely on data-driven estimations, failing to capture the rich topological structure of the network. To address these challenges, we propose Spatio-Temporal Adaptive Conformal Inference (\texttt{STACI}), a novel framework that integrates network topology and temporal dynamics into the conformal prediction framework. \texttt{STACI} introduces a topology-aware nonconformity score that respects directional flow constraints and dynamically adjusts prediction sets to account for temporal distributional shifts. We provide theoretical guarantees on the validity of our approach and demonstrate its superior performance on both synthetic and real-world datasets. Our results show that \texttt{STACI} effectively balances prediction efficiency and coverage, outperforming existing conformal prediction methods for stream networks.
\end{abstract}

\section{Introduction}
\label{sec:intro}
\vspace{-.15in}
Stream networks represent a distinctive class of spatiotemporal graphs where data observations follow directional pathways and evolve dynamically over both space and time \cite{isaak2014applications}. 
These networks are prevalent in various domains such as hydrology, transportation, and environmental monitoring, where data exhibit strong flow constraints \cite{cressie2006spatial, hoef2006spatial, qian2023uncertainty, zhu2021spatio, sheth2023streams}. 
For example, in hydrology, river networks dictate the movement of water flow and pollutant dispersion \cite{qadir2008spatio}, while in transportation, road and rail networks determine congestion and travel times \cite{zhu2021spatio, zhou2024hierarchical}. 
Understanding and modeling these networks are crucial for infrastructure planning, disaster response, and ecological conservation. 

A fundamental challenge in stream network analysis is predicting future observations and quantifying their uncertainty across multiple interconnected locations governed by network topology. Given the dynamic nature of these systems, accurate and reliable uncertainty quantification (UQ) is essential for risk assessment, decision-making, and resource allocation. For example, in transportation, estimating uncertainty in traffic volume forecasts across critical junctions enables optimal routing and congestion management \cite{zhu2021spatio}. However, the hierarchical dependencies, directional flow constraints, and evolving conditions inherent in stream networks introduce significant complexities.

Recent advances in machine learning and statistical modeling have enhanced predictive accuracy for spatio-temporal data and enabled effective UQ with statistical guarantees \cite{wu2021trafficquantifying,gao2023spatiotemporal,zheng2024optimizing}. In particular, conformal prediction (CP) has emerged as a powerful UQ framework, providing finite-sample validity guarantees under mild assumptions \cite{shafer2008tutorial}. By constructing prediction sets with valid coverage probabilities, CP ensures that future observations fall within specified confidence intervals, enhancing reliability in decision-support systems \cite{lei2018distribution, kivaranovic2020conformal, cauchois2021knowing, zargarbashi2023conformal}.

Despite its success in various domains, traditional CP methods face significant limitations when applied to stream networks due to two key challenges: ($i$) \emph{Multivariate prediction}: Unlike standard time-series predictions that focus on a single target variable, stream networks require joint predictions at multiple locations, where observations are highly interdependent. Applying CP independently at each location neglects network-wide dependencies, leading to inefficiencies in prediction set construction and potential loss of coverage guarantees. ($ii$) \emph{Intricate spatio-temporal flow constraints}: Traditional CP assumes exchangeable data, an assumption that fails in stream networks due to directional flow constraints. 
While graph-based and spatial models account for topological relationships, stream networks exhibit unique dependency structures that neither conventional graph-based approaches nor purely data-driven models fully capture.
Existing CP approaches either completely ignore dependencies without considering the spatio-temporal dynamics \cite{stankeviciute2021conformal,diquigiovanni2021distribution} or attempt to learn dependencies solely from data without incorporating topological constraints \cite{xu2023conformal,sun2024copula}. The former results in overly conservative or miscalibrated prediction sets, while the latter risks overfitting to specific network conditions, reducing generalizability. 

To address these challenges, we propose a novel framework, Spatio-Temporal Adaptive Conformal Inference (\texttt{STACI}), for constructing uncertainty sets in stream networks. Our method integrates network topology and temporal dynamics into the conformal prediction framework, yielding more efficient and reliable UQ. Specifically, we develop a nonconformity score that explicitly incorporates spatial dependencies across multiple locations on the stream network as determined by their underlying topology, balancing observational correlations with topology-induced dependencies.
To achieve this balance, we introduce a weighting parameter that regulates the contribution of topology-based covariance and data-driven estimates. A greater reliance on the topology-induced covariance structure improves coverage guarantees, assuming it accurately reflects underlying dependencies. Conversely, prioritizing sample-based estimates mitigates potential misspecifications in the topology-induced covariance, often leading to better predictive efficiency. 
Additionally, we consider a dynamic adjustment mechanism that accounts for temporal distributional shifts, allowing prediction intervals to adapt over time and maintain valid coverage in non-stationary environments.

We provide a theoretical analysis of \texttt{STACI}, demonstrating that it maximizes prediction efficiency by reducing uncertainty set volume while maintaining valid coverage guarantees. To validate its effectiveness, we evaluate \texttt{STACI} on synthetic data with a stationary covariance matrix and real-world data with time-varying covariance, comparing its performance against state-of-the-art baseline. \footnote{Our code is publicly available at \url{https://github.com/fangxin-wang/STCP}.}
Both our theoretical and empirical results underscore the importance of the weighting parameter that balances data-driven insights with topology-induced knowledge, optimizing performance and enhancing predictive reliability in stream network applications.

Our contribution can be summarized as follows:
\begin{itemize}[left=0pt,topsep=0pt,parsep=0pt,itemsep=0pt]
    \item We propose a novel conformal prediction framework specifically designed for stream networks, integrating both spatial topology and temporal dynamics to enhance uncertainty quantification.
    \item We highlight the limitations of purely data-driven dependency estimation in stream networks and introduce a principled approach that leverages both observational data and inherent network structure.
    \item We provide a theoretical analysis establishing \texttt{STACI}’s validity and efficiency, and empirically demonstrate its superior performance in achieving an optimal balance between coverage and prediction efficiency on both synthetic and real-world datasets.
\end{itemize}

\vspace{-.15in}
\section{Related Works}
\vspace{-.15in}
Stream networks, such as hydrology \cite{isaak2014applications,hoef2006spatial,sheth2023streams}, transportation networks \cite{gao2023spatiotemporal,liu2023largest}, and environmental science networks \cite{launay2015calibrating}, have been extensively studied due to their critical role in natural and engineered systems. Forecasting for stream network can be approached from two perspectives: as a graph prediction problem or as a multivariate time series prediction problem. In this work, we focus on the latter one, with the aim of predicting future data based on historical network data.

Many approaches to stream network analysis rely on domain-specific statistical and physical models. \cite{hoef2006spatial} introduced spatial stream network models for hydrology, emphasizing the importance of flow-connected relationships and spatial autocorrelation. The tail-up model \cite{ver2010moving} generalized spatial covariance structures to stream networks by weighting observations based on flow connectivity.
Recent advances in machine learning have spurred innovative approaches for modeling stream networks for point forecasts, particularly through graph-based frameworks that leverage their inherent spatio-temporal (ST) graph dynamics~\cite{huang2014deep, du2020hybrid}. 

While effective and widely adopted, models without uncertainty quantification often lack considerations for reliability, posing limitations particularly in safety-critical scenarios. To address this, some studies~\cite{wu2021trafficquantifying, zhuang2022trafficuncertainty, sun2022conformal, wang2023trafficuncertainty, qian2023uncertainty} turn to explore interval prediction, which ensures that prediction intervals cover the ground-truth values with a pre-defined high probability, offering a more reliable alternative. 
Among these approaches, the majority of studies employ Bayesian methods to construct prediction intervals for ST forecasting problems~\cite{wang2024uncertainty}. These methods commonly utilize Monte Carlo Dropout~\cite{wu2021trafficquantifying, qian2023uncertainty} or Probabilistic Graph Neural Networks~\cite{zhuang2022trafficuncertainty, wang2023trafficuncertainty}. However, the performance of Bayesian methods has been found to be sensitive to the choice of prediction models and priors, particularly the type of probabilistic distributions~\cite{wang2023trafficuncertainty}. To address these limitations, classic Frequentist-based methods have been employed, such as conformal prediction, which generally offer more robust coverage across data and model variations.


Conformal prediction (CP)~\cite{vovk2005algorithmic} has recently gained significant attention across various domains, including graph-structured data~\cite{clarkson2023distribution, huang2023uncertainty, lunde2023conformal} and multi-dimensional time series~\cite{sun2024copula, messoudi2021copula,xuconformal}. Existing CP methods for graphs~\cite{clarkson2023distribution, huang2023uncertainty, lunde2023conformal} have primarily addressed node/edge classification and ranking tasks, where UQ concerns discrete labels or scores on static graph structures. In contrast, our work focuses on a fundamentally different problem: UQ for spatio-temporal forecasting on dynamic networks. Given that spatio-temporal graphs can be naturally formulated as a special case of multivariate time series, we discuss related CP methodologies developed for the latter setting~\cite{sun2024copula, messoudi2021copula,xuconformal}.
\cite{sun2024copula} assumes that data samples for each entire time series are drawn independently from the same distribution, while \cite{messoudi2021copula} assumes exchangeability in the data. Both approaches fail to capture the complex temporal and spatial dependencies inherent in ST graphs, limiting their applicability. \cite{xuconformal} construct ellipsoidal prediction regions for non-exchangeable multi-dimensional time series, but their model neglects the inherent graph structure embedded within the multi-dimensional time series and overlook scenarios where the error process (see \cref{eq:setup}) is non-stationary, a prevalent feature in real-world data. Section B in the Appendix provides a taxonomy of existing CP methods, highlighting our unique positioning within the CP literature. To the best of our knowledge, no previous work has specifically tailored CP for stream networks or other spatio-temporal graphs, reinforcing the novelty to this domain.
\vspace{-.15in}
\section{Problem}
\vspace{-.15in}
\begin{wrapfigure}{R}{0.5\textwidth}
\vspace{-0.1in}
  \begin{center}
    \includegraphics[width=0.5\textwidth]{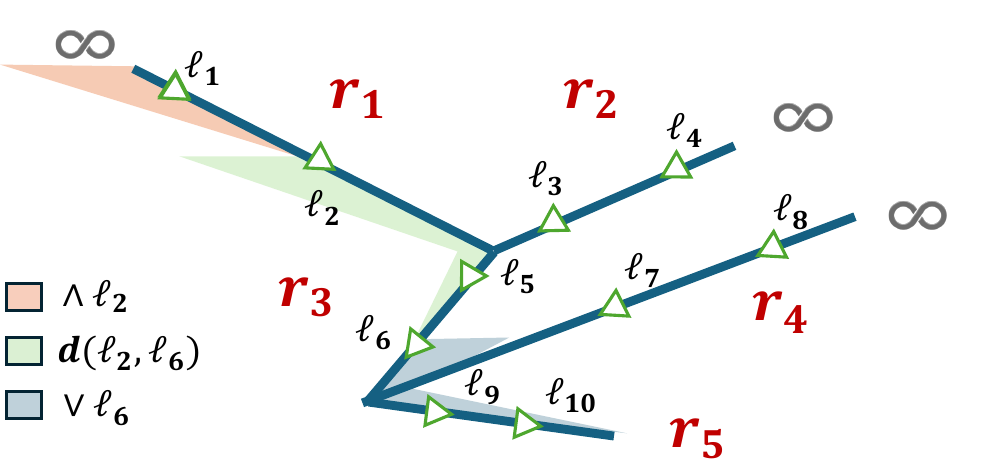}
  \end{center}
  \vspace{-0.1in}
  \caption{An example of stream network $\gG$. The network segments $\{r_1, \dots, r_5\}$ are denoted by blue lines, and the observation points $\{\ell_1, \dots, \ell_{10}\}$ are marked with green triangles, pointing to the flow directions. The upstream of location $\ell_2$ are segments accompanied by orange area, and the downstream of location $\ell_6$ are blue shaded. The hydrologic distance between $\ell_2$ and $\ell_6$ is calculated through adding lengths of green shaded segments in both $r_1$ and $r_3$.}
  \label{fig:stream}
\end{wrapfigure}

Consider a stream network $\gG$ with fixed flow direction at time $t \in \{1, \dots, T\}$, with $I$ observational sites indexed by $\mathcal{I} = \{1, \dots, I\}$. Let $\mathcal{L} \subset \mathbb{R}^2$ denote the set of all geolocations on the network, and let the geographical location of site $i \in \mathcal{I}$ be represented as $\ell_i \in \mathcal{L}$.  
The stream network consists of segments $\{r_j \subset \mathcal{L}, j \in \mathcal{J}\}$, where $\mathcal{J}$ is the index set of all stream segments. Each site $i \in \mathcal{I}$ is located within a specific segment $r_j$ for some $j \in \mathcal{J}$, and a segment may contain multiple or no observational sites.  
For any location $u \in \mathcal{L}$, we define $\wedge u$ as the set of all upstream segments of location $u$, and $\vee u$ as the set of all downstream segments of location $u$. 
The hydrologic distance between two locations $v, u \in \mathcal{L}$, denoted as $d(v, u)$, is the distance measured along the stream. If $v$ and $u$ belong to the same segment $r_i$, $d(v, u)$ is simply the Euclidean distance between $v$ and $u$. See Figure~\ref{fig:stream} for an illustration.

Now, consider a multivariate time series observed at the $I$ sites. We denote the dataset as $\mathcal{D} \coloneqq \{(X_t, Y_t)\}_{t \in [T]}$. Here $Y_t \coloneqq [Y_t(\ell_1), Y_t(\ell_2), \dots, Y_t(\ell_I)]^\top \in \mathbb{R}^I$, and $Y_t(\ell_i)$ (or simply $Y_t^i$) represents the observation at location $\ell_i$ at time $t$. The historical observations are given by $X_t \in \mathbb{R}^{I \times h}$, defined as
$
X_t \coloneqq [Y_{t-1}, Y_{t-2}, \dots, Y_{t-h}]^\top \in \mathbb{R}^{I \times h}
$.
We assume that $Y_t$ follows an unknown true model $f(X_t)$ with additive noise $\epsilon_t$, such that:  
\begin{equation}
    \label{eq:setup}
    Y_t=f(X_t)+\epsilon_t,
\end{equation}
where $\epsilon_t \in \mathbb{R}^I$ has zero mean and a positive definite covariance matrix $\Sigma \succ 0$.

The goal is to construct a prediction set for $Y_{T+1}$ 
given the new history $X_{T+1}$, denoted by $\mathcal{C}(X_{T+1})$, such that, for a predefined confidence level $\alpha$, the following coverage guarantee holds:  
\[
    \mathrm{P}(Y_{T+1} \in \mathcal{C}(X_{T+1})) \geq 1 - \alpha.
\]  
This objective can be achieved using split conformal prediction (CP) \cite{vovk2005algorithmic}, a widely used framework for uncertainty quantification.  
Split CP operates by first partitioning the data into a training set and a \emph{calibration} set. The prediction model $\hat{f}$ is trained exclusively on the training set. To assess the reliability of predictions, a \emph{nonconformity score} is computed, which quantifies the deviation of each calibration sample from the ground truth. Given a target confidence level $\alpha$, the method determines the $(1-\alpha)$-th quantile of the nonconformity scores from the calibration data. This quantile is then used to adjust $\hat{f}$'s predictions for test samples, ensuring the constructed prediction sets maintain valid coverage.  
Under the assumption that the calibration and test data are exchangeable, the prediction sets are guaranteed to achieve a coverage rate of at least $1 - \alpha$ on the test data.  



The challenges of performing multivariate time-series prediction over a stream network are twofold: 
($i$) \emph{Multi-dimensionality}: The response variable $Y_t$ is multivariate and potentially high-dimensional, significantly increasing the complexity of constructing accurate prediction sets. Standard CP methods, when applied to multi-dimensional variables without a carefully designed nonconformity score, often produce overly conservative prediction sets. This leads to inefficiencies, as the prediction set size $|\mathcal{C}(X_t)|$ becomes too large to provide meaningful uncertainty quantification.
($ii$) \emph{Non-exchangeability}: Observational sites exhibit complex spatial and temporal dependencies due to strong correlations imposed by the network topology. As a result, traditional CP methods, which rely on exchangeability assumptions, cannot be readily applied.

\vspace{-.15in}
\section{Proposed Framework}
\vspace{-.15in}

This paper proposes a novel framework, referred to as spatio-temporal adaptive conformal inference (\texttt{STACI}), for constructing uncertainty sets in spatio-temporal stream networks.
Our approach consists of two key components:
($i$) We develop a nonconformity score that explicitly captures spatial dependencies induced by the stream network’s topology, leading to more efficient prediction sets.
($ii$) We account for temporal distributional shifts to refine prediction sets dynamically, ensuring reliable coverage over time.
We demonstrate that \texttt{STACI} significantly improves prediction efficiency while maintaining valid coverage guarantees, making it a robust and effective approach for uncertainty quantification in spatio-temporal settings.

\paragraph{Topology-aware Nonconformity Score}
\label{sec 4.1}
\vspace{-.1in}

We use the most recent $n < T$ data points to construct the calibration dataset. 
Specifically, we denote the calibration dataset as $\mathcal{D}_\text{cal}:=\{(X_t,Y_t),t=T-n+1,\cdots T-1,T\}$, and define $\hat{Y_t} \coloneqq \hat{f}(X_t)$, where $\hat{f}$ is the fitted model trained on the rest of the data $\mathcal{D} \setminus \mathcal{D}_\text{cal}$. 
For each calibration data point $(X_t, Y_t) \in \mathcal{D}_\text{cal}$, 
we compute its nonconformity score, denoted by $s(X_t, Y_t)$. 

To account for the intricate spatio-temporal dependencies, we consider a general class of nonconformity score functions based on the Mahalanobis distance \cite{pmlr-v230-katsios24a}:
\begin{equation}
\label{eq:S}
    s(X_t,Y_t) \coloneqq \hat{\epsilon}_t^\top A \hat{\epsilon}_t,\quad\forall t\in\mathcal{D}_\text{cal},
\end{equation} 
where $A$ is an $I \times I$ symmetric positive definite matrix and $\hat{\epsilon}_t \coloneqq Y_t-\hat{Y}_t-\bar{\epsilon}_t$ is the centered prediction error, with $\bar{\epsilon}_t$ denoting the sample average of errors on $\mathcal{D}_\text{cal}$. 

The core idea of our method is a linearly weighted representation for $A$, which integrates both topology-induced and sample-based covariance estimates. Formally,
\begin{equation}
    \label{eq:comb-cov}
    A \coloneqq (1-\lambda) \hat{\Sigma}^{-1}_n + \lambda \hat{\Sigma}^{-1}_{\gG}.
\end{equation} 
Here $\hat{\Sigma}_n$ is the sample covariance matrix computed from the residuals $\{\hat{\epsilon}_t, t\in\mathcal{D}_\text{cal}\}$, and $\hat{\Sigma}_{\gG}$ represents the covariance structure induced by the stream network topology.
The weighting parameter $\lambda\in [0,1]$ balances these two estimates. A higher value of $\lambda$ places greater reliance on the topology-driven covariance structure, assuming it accurately captures the underlying dependencies. 
Conversely, a lower $\lambda$ shifts reliance toward the sample-based estimate, mitigating potential misspecifications in the topology-induced covariance. 

Unlike the method proposed by \cite{xu2023conformal}, which relies solely on the sample covariance estimate, this formulation incorporates the underlying topology of the stream network. 
By balancing data-driven and structural information, it provides a more robust covariance estimation, leading to better prediction efficiency without sacrificing coverage validity.
\vspace{-.1in}
\paragraph{Topology-induced Covariance Estimation} 
We develop a novel method to estimate the topology-induced covariance $\hat{\Sigma}_{\gG}$ used in \cref{eq:comb-cov} by assuming the observations on the stream network can be captured by a tail-up model \cite{hoef2006spatial,ver2010moving,higdon2022non}. The tail-up model is formally defined as follows: 

\begin{definition}[Tail-up model]
\label{assump:tailup}
    Given a stream network $\gG$, the observation at any location $u$ on the network can be modeled as a white-noise random process, which is constructed by integrating a moving average function over the upstream process, \textit{i.e.},\begin{equation} 
    \label{eq:tailup}
    Y(u) = \mu(u) + \int_{ \wedge u } m(v- u)\sqrt {\frac{w(v)}{w(u)} } d B(v),
    \end{equation} where $\wedge u$ denotes all the segments that are the upstream of $u$. Here, $\mu(u)$ is the deterministic mean at $u$, and $m(v-u)$ is a moving average function capturing the influence from upstream location $v$ to $u$. Both $w(v)$ and $w(u)$ are weights that satisfy the additivity constraint such that the variance remains constant across sites.
\end{definition}
\vspace{-0.05in}
We note that the tail-up model only requires the assumptions of ergodicity and spatial stationarity \cite{ver1993multivariable}, which is highly flexible and can be broadly applied to a wide range of stream network data. Also, the choice of the moving average function $m(\Delta)$ remains adaptable, as long as it has a finite volume, allowing the model to accommodate different spatial structures effectively.

To estimate $\hat{\Sigma}_{\gG}$, we model $B(v)$ using Brownian motion and adopt an exponential moving average function for $m(\Delta)=\beta\exp(-\Delta/\phi)$.
Therefore, the topology-induced covariance between any two locations $u, v$ can be expressed as follows (See Lemma ~\ref{lem:cov} in Section A of the Appendix):
\begin{equation}\label{eq:cov_gt0}
\hat{\Sigma}_{\mathcal{G}}(u,v)
  = \sigma^{2}\sqrt{\frac{w(u)}{w(v)}}
    \exp\bigl(-d(u,v)/\phi\bigr)\,\text{, \;if $u\to v$},
\end{equation} where $\phi$ and $\sigma^2$ are estimated scaling parameters of the tail-up model. 
In practice, weights $w$ can be obtained by estimating the intensity of the flow through the observational, for instance, using normalized traffic counts as the weights for traffic stream network data.

Intuitively, the covariance structure reflects how information propagates along the stream network. The exponential decay in \cref{eq:cov_gt0} models diminishing influence with increasing hydrologic distance $d(u,v)$, while the weight term $\sqrt{
w(u)/w(v)}$ modulates this effect based on flow intensity. This formulation naturally aligns with real-world stream dynamics, where proximal upstream sites exert stronger influence than distant or disconnected ones.
\vspace{-0.1in}
\paragraph{Adaptive Uncertainty Set Construction}
We construct a spatio-temporally adaptive prediction set for a new observed history $X_{T+1}$ using our proposed nonconformity score, defined in \cref{eq:S}, as follows:
 \[
     \mathcal{C}(X_{T+1};\alpha)\nonumber=\{y: s(X_{T+1},y)\leq \hat{Q}_{1-\alpha}\},
 \]
where $\hat{Q}_{1-\alpha}$ is the $(1-\alpha)$-th quantile of the empirical cumulative distribution function of $\{s(X_t,Y_t),t\in \mathcal{D}_{cal}\}$.

 To account for potential temporal distribution shifts in the predictive error of \cref{eq:setup}, we adopt the Adaptive Conformal Inference (ACI) framework proposed in \cite{gibbs2021adaptive}. This approach dynamically updates the confidence level $\alpha_t$ over time, ensuring that the prediction set remains responsive to evolving data distributions.
 Specifically, we iteratively update $\alpha_t$, and reconstruct the prediction set $\mathcal{C}(X_t,\alpha_t)$ accordingly. At the initial test time $T+1$, the confidence level is set as $\alpha_{T+1} = \alpha$. For subsequent time steps $t > T+1$, $\alpha_t$, we update $\alpha_t$ with a step size $\gamma \geq 0$ as follows: 
 \begin{equation}
     \label{eq:adaptive-confidence-level}
    \alpha_{t+1} = \alpha_t + \gamma (\alpha - \mathds{1}\{Y_t \notin \mathcal{C}(X_t;\alpha_t)\}),\quad \forall t \geq T+1.
 \end{equation}
 The rationale behind ACI is that if the prediction set fails to cover the true value at time $t$, the effective error level is reduced, leading to a wider prediction interval at time $t+1$, thereby increasing the likelihood of coverage. A larger step size $\gamma$ makes the method more responsive to observed distribution shifts but also introduces greater fluctuations in $\alpha_t$. When $\gamma = 0$, the method reduces to standard conformal prediction with fixed $\alpha$. The detailed analysis of coverage guarantee of Adaptive Uncertainty Set construction is discussed in Section C in Appendix.

\vspace{-.15in}
\section{Theoretical Analysis}
\vspace{-.15in}
Our theoretical analysis focuses on establishing two key properties for the proposed \texttt{STACI}:
\begin{enumerate}[left=0pt,topsep=0pt,parsep=0pt,itemsep=0pt]
    \item Optimal Efficiency: We establish that \texttt{STACI} maximizes predictive efficiency by reducing the uncertainty set volume, justifying the need for accurate covariance estimation in spatio-temporal stream networks (Theorem~\ref{thm:precision}).  
    \item Validity Guarantees under Stationarity and Adaptation to Distribution Shifts: We prove that \texttt{STACI} ensures valid conditional coverage under stationary assumptions (Theorem~\ref{thm:valid}) and extend the framework to handle non-stationary settings via an ACI adjustment, ensuring approximate average coverage (Proposition 1 in Section C of the Appendix).
\end{enumerate}

Our analysis is based on the Mahalanobis distance framework in \cref{eq:S}, which enables the construction of arbitrary ellipsoidal uncertainty sets, with greater flexibility for nonconformity scores. For example, standard CP with spherical uncertainty sets arises as a special case when $A$ is an identity matrix. Another instance is the approach in \cite{xu2023conformal}, where $A$ is set as the sample covariance matrix. 

We adopt standard asymptotic notation and norm definitions. 
The big-$\mathcal{O}$ notation $\mathcal{O}(\cdot)$ characterizes an upper bound on a function's growth rate: if $f(n) = \mathcal{O}(g(n))$, then there exists a positive constant $C$ such that $f(n) \leq Cg(n)$, for all $n \geq n_0$.
The little-$o$ notation $ o(\cdot)$ denotes strictly smaller asymptotic growth, with $f(n) = o(g(n))$  implying $\lim_{n\to\infty} f(n)/g(n)=0$. 
Additionally, we use standard $\ell_2$ norms for quantifying vector and matrix magnitudes.

\vspace{-0.1in}
\subsection{Coverage Validity}\label{sec:valid}
\vspace{-.1in}
We analyze the conditional coverage validity of the proposed method. Consider the additive error model described in \cref{eq:setup} where the errors, $\epsilon_t$, are \textit{i.i.d.}. We introduce the following assumption and, for simplicity, denote the nonconformity score $\epsilon_t^{\top}A\epsilon_t$ as $s_t$.

\begin{assumption}
[Estimation quality]\label{assump:estimate} There exists a sequence $\{\nu_n\}$,  $n \geq 1$ such that $  \frac{1}{n}\sum_{t=T-n+1}^T||\epsilon_t-\hat{\epsilon}_t||^2 \leq \nu^2_n , ||\epsilon_{T+1}-\hat{\epsilon}_{T+1}|| \leq \nu_n$ .
\end{assumption}
\begin{remark}
The assumption ensures that the prediction error is bounded by $\nu_n^2$. For many estimators, the $\nu_n$ vanishes as $n\to\infty$, indicating improved estimation accuracy with larger sample sizes \cite{chen1999improved}.
\end{remark}

\begin{assumption}[Convergence of $A_n$]
\label{assump:convergence}
The sequence $\{A_n\}$ associated with the nonconformity score converges to a fixed matrix $A$ as $n$ increases, with an upper-bounded convergence rate $o(g(n))$, i.e.
\(
\|A_n - A\| \;=\; o\bigl(g(n)\bigr).
\)
Additionally, there exists a constant $r>0$ such that  $\|A\|\leq r$.
\end{assumption}
\begin{remark}
When designing nonconformity scores, the matrix $A$ can be chosen to either remain constant or converge to a fixed matrix. 
The flexibility in selecting $A$ allows for adaptability across different scenarios.
For example, if the true covariance matrix of the error $\epsilon$ is known, $A$ can be set as its inverse. 
Alternatively, if only sample estimates are available, $A$ can be chosen as the inverse of the estimated sample covariance matrix of $\epsilon$, provided it converges under proper tail behavior conditions \cite{vershynin2012close}. The major difference between  choices of $A_n$ lies in the respective convergence rates.
\end{remark}


\begin{assumption}[Regularity conditions for $s_t$ and $\epsilon_t$]  
\label{assump:eps}  
Assume that the cumulative distribution function (CDF) of the true nonconformity score, $F_s(x)$, is Lipschitz continuous with a constant $L > 0$.  
Suppose there exist constants $\kappa_1, \kappa_2 > 0$ such that 
\(
\|\epsilon_t\| \leq \kappa_1 I 
~\text{almost surely, and}~
\mathrm{Var}[\|\epsilon_t\|^2] \leq \kappa_2 I.
\).  
\end{assumption}

\begin{theorem}[Validity]  
\label{thm:valid}  
Under the assumptions stated above, the proposed method satisfies the following conditional coverage guarantee:  
\begin{align*}  
 \left| \mathbb{P}(Y_{T+1} \in \mathcal{C}_{T+1}(\alpha)|X_{T+1}=x) - (1 - \alpha) \right| 
\leq  (4L+2L\sqrt{\omega}+2)\sqrt{\omega}+6\sqrt{\frac{\log(16n)}{n}}+\frac{\log(16n)}{n},
\end{align*}
where 
\[
\omega = \nu_n^2r+2r\nu_n\sqrt{(\kappa_1+\sqrt{\kappa_2})I}+o(g(n))(\kappa_1+\sqrt{\kappa_2})I.
\]
\end{theorem}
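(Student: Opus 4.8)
\textbf{Proof proposal for Theorem~\ref{thm:valid}.}

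The plan is to compare the coverage event $\{Y_{T+1} \in \mathcal{C}_{T+1}(\alpha)\}$, which is $\{\hat{s}_{T+1} \le \hat{Q}_{1-\alpha}\}$ with $\hat{s}_{T+1} = \hat{\epsilon}_{T+1}^\top A_n \hat{\epsilon}_{T+1}$, to the ideal event $\{s_{T+1} \le q_{1-\alpha}\}$, where $s_t = \epsilon_t^\top A \epsilon_t$ are the i.i.d.\ "true" scores and $q_{1-\alpha} = F_s^{-1}(1-\alpha)$ is the population quantile. Since $s_{T+1}$ is continuous with CDF $F_s$, the ideal event has probability exactly $1-\alpha$ conditionally on $X_{T+1}=x$ (here one uses that $\epsilon_{T+1}$ is independent of $X_{T+1}$ in the i.i.d.\ additive-noise model). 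So the total error splits, via triangle inequality, into (a) the deviation between the empirical quantile $\hat{Q}_{1-\alpha}$ of the \emph{true} scores $\{s_t\}_{t \in \mathcal{D}_{\text{cal}}}$ and $q_{1-\alpha}$ — a DKW/Glivenko–Cantelli term — and (b) the effect of replacing true scores by estimated scores, both in the test score $\hat{s}_{T+1}$ vs.\ $s_{T+1}$ and in the calibration scores $\hat{s}_t$ vs.\ $s_t$, combined with the Lipschitz continuity of $F_s$.

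The main quantitative lemma I would isolate is a uniform bound $|\hat{s}_t - s_t| \le \omega$ for all $t \in \mathcal{D}_{\text{cal}} \cup \{T+1\}$ (or at least an averaged version for the calibration indices and a pointwise version for $T+1$). To get it, write
\[
\hat{\epsilon}_t^\top A_n \hat{\epsilon}_t - \epsilon_t^\top A \epsilon_t = \hat{\epsilon}_t^\top (A_n - A)\hat{\epsilon}_t + (\hat{\epsilon}_t - \epsilon_t)^\top A (\hat{\epsilon}_t + \epsilon_t),
\]
and bound each piece: the first by $\|A_n - A\|\,\|\hat{\epsilon}_t\|^2 = o(g(n))\cdot O(I)$ using Assumption~\ref{assump:convergence} together with $\|\hat\epsilon_t\| \le \|\epsilon_t\| + \nu_n \lesssim \kappa_1 I$; the second by $\|\hat\epsilon_t - \epsilon_t\|\cdot r \cdot (\|\hat\epsilon_t\| + \|\epsilon_t\|)$, using Assumption~\ref{assump:estimate} for $\|\hat\epsilon_t - \epsilon_t\| \le \nu_n$ and Assumption~\ref{assump:eps} for the magnitudes. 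Collecting terms and using $\operatorname{Var}[\|\epsilon_t\|^2]\le\kappa_2 I$ to control $\|\epsilon_t\|^2$ on average by $(\kappa_1+\sqrt{\kappa_2})I$ yields precisely the stated $\omega = \nu_n^2 r + 2r\nu_n\sqrt{(\kappa_1+\sqrt\kappa_2)I} + o(g(n))(\kappa_1+\sqrt\kappa_2)I$. (The cross term $\nu_n^2 r$ comes from the $\|\hat\epsilon_t-\epsilon_t\|^2$ part of the expansion.) I would be somewhat careful about whether $\omega$ is meant to hold surely or with high probability — the $\kappa_2$-variance term suggests an averaging/concentration step, so I expect one more DKW- or Bernstein-type bound there, contributing to the $\sqrt{\log(16n)/n}$ terms.

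Once $|\hat s_t - s_t|\le\omega$ is in hand, I would run the standard quantile-stability argument: the empirical CDF of $\{\hat s_t\}$ differs from that of $\{s_t\}$ by a shift of at most $\omega$ in the argument, which by $L$-Lipschitzness of $F_s$ costs at most $L\omega$ in probability; the empirical CDF of $\{s_t\}$ differs from $F_s$ by at most $\sqrt{\log(2/\delta)/(2n)}$ uniformly (DKW), with $\delta$ chosen so the constants produce $\sqrt{\log(16n)/n}$; and the quantile-index correction $\lceil(1-\alpha)(n+1)\rceil/n$ contributes an $O(1/n)$ term. Inverting "closeness of CDFs" to "closeness of quantiles" and then pushing everything back through $F_s$ one more time (another factor of $L$, plus a $\sqrt{\omega}$ from where the Lipschitz bound is applied to $\sqrt{\cdot}$-scaled deviations near the test score) assembles the coefficient $(4L + 2L\sqrt\omega + 2)\sqrt\omega$; the $\sqrt\omega$ rather than $\omega$ appearing here is the subtle point — it arises because the event perturbation $|\hat s_{T+1}-s_{T+1}|\le\omega$ must be converted into a probability bound of the form $\mathbb{P}(|s_{T+1}-q_{1-\alpha}|\le\omega)$, and controlling the region where $s_{T+1}$ is within $\omega$ of the quantile when $s$ is a quadratic form (hence has a square-root-type local density behavior near small values) costs $\sqrt\omega$. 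The main obstacle, I expect, is exactly this bookkeeping: tracking how each $\omega$ turns into $\omega$, $\sqrt\omega$, or $L\omega$ through the chain (true score $\to$ estimated score $\to$ empirical quantile $\to$ population quantile $\to$ coverage probability) so that the constants match the claimed $(4L+2L\sqrt\omega+2)\sqrt\omega + 6\sqrt{\log(16n)/n} + \log(16n)/n$ exactly, rather than just up to unspecified constants.
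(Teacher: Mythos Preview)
Your high-level decomposition and the derivation of $\omega$ are essentially right and match the paper. But the step you flag as ``subtle''---how $\omega$ becomes $\sqrt{\omega}$---is misidentified, and the mechanism you propose would not work.

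The assumptions only give an \emph{averaged} bound on the calibration score deviations: from $\tfrac{1}{n}\sum_t\|\Delta_t\|^2\le\nu_n^2$ and Cauchy--Schwarz one gets $\tfrac{1}{n}\sum_t|\hat s_t - s_t|\le\omega$ (this is the paper's first lemma), not a uniform $|\hat s_t - s_t|\le\omega$. From an averaged bound you cannot simply shift the empirical CDF argument by $\omega$ and pay $L\omega$; that would require a uniform bound. Instead, the paper applies a Markov/pigeonhole split at threshold $\sqrt{\omega}$: the set $S=\{t:|\hat s_t-s_t|>\sqrt{\omega}\}$ satisfies $|S|\sqrt{\omega}\le n\omega$, hence $|S|\le n\sqrt{\omega}$. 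Indices in $S$ contribute at most $\sqrt{\omega}$ to the CDF gap $|\hat F_{n+1}(x)-\tilde F_{n+1}(x)|$, and indices outside $S$ have pointwise perturbation $\le\sqrt{\omega}$, contributing $\tfrac{1}{n}\sum_t 1\{|s_t-x|\le\sqrt{\omega}\}\le 2L\sqrt{\omega}+2\sup_x|\tilde F_{n+1}(x)-F_s(x)|$ via Lipschitzness and DKW. That is the sole source of $\sqrt{\omega}$; it has nothing to do with the local density of quadratic forms near small values.

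Second, the paper never inverts to quantiles. It rewrites the coverage gap as
\[
\bigl|\mathbb{P}(\hat F_{n+1}(\hat s_{T+1})\le 1-\alpha)-\mathbb{P}(F_s(s_{T+1})\le 1-\alpha)\bigr|
\le \mathbb{P}\bigl(|F_s(s_{T+1})-(1-\alpha)|\le |\hat F_{n+1}(\hat s_{T+1})-F_s(s_{T+1})|\bigr),
\]
using $|1\{a\le x\}-1\{b\le x\}|\le 1\{|b-x|\le|a-b|\}$. On the DKW event $A_n$, the inner quantity is bounded by $(2L+1)\sqrt{\omega}+3\sqrt{\log(16n)/n}+L\omega$ (combining the CDF lemma above with $|F_s(\hat s_{T+1})-F_s(s_{T+1})|\le L|\hat s_{T+1}-s_{T+1}|\le L\omega$ for the test point). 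The probability is then read off directly from the probability integral transform $F_s(s_{T+1})\sim\mathrm{Unif}(0,1)$, which doubles the bound, and adding $\mathbb{P}(A_n^c)$ gives the stated $(4L+2)\sqrt{\omega}+2L\omega+6\sqrt{\log(16n)/n}+\log(16n)/n$. Your proposed route through quantile inversion and a ``square-root density'' argument does not appear and would not produce these constants.
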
  

\begin{remark}  
The finite-sample bound on the coverage gap is directly influenced by the estimation quality and the convergence rate of $A_n$, which is given by $\max(\mathcal{O}(\frac{\log n}{n}),\mathcal{O}(\nu_n),\mathcal{O}(\sqrt{g(n)}))$. In general, reducing the coverage gap requires high-accuracy estimations (\ie, a rapidly vanishing $\nu_n$) and a well-chosen nonconformity score matrix $A_n$ that converges quickly.  
\end{remark}  
\vspace{-0.05in}
Theorem~\ref{thm:valid} highlights the importance of incorporating topology-based estimators in \texttt{STACI}. Relying solely on the sample covariance matrix often leads to coverage gaps in finite samples, undermining validity. In contrast, the topology-based matrix acts as a covariance estimator with topology-informed regularization, generally achieving faster convergence than the sample covariance estimator. A hybrid approach that combines both estimators provides an optimal trade-off between validity and efficiency.
\vspace{-.15in}\subsection{Prediction Efficiency}
\vspace{-.1in}
We evaluate the efficiency of \texttt{STACI} via the volume of the prediction set in 
$I$-dimensional space, defined as  
$V(A,r) = \frac{\pi^{I/2}}{\Gamma\left(\frac{I}{2} + 1\right)} \cdot r^{I/2} \cdot \det(A)^{-1/2}.$ 
The radius of the prediction set is determined by the $(1-\alpha)$-th quantile of the empirical CDF, computed from $n$ data points in the calibration dataset. This radius is denoted as  
$
\hat{Q}_{1-\alpha}(\{\hat{\epsilon}_t^{\top}A\hat{\epsilon}_t,t\in \mathcal{D}_{\text{cal}}\})
$. In the ideal case where $\hat{f}(X_t) = f(X_t)$ and $\hat{\epsilon}_t = \epsilon_t$, minimizing inefficiency reduces to:  
$
    \min_{A\succ 0} V(A,\hat{Q}_{1-\alpha}(\{\epsilon_t^{\top}A\epsilon_t, t\in [n] \})).
$ 
To simplify computation, we approximate the empirical quantile with the true quantile, justified by the Glivenko-Cantelli Theorem \cite{tucker1959generalization}, which ensures the convergence 
\(
    \lim_{n \to \infty} \hat{Q}_{1-\alpha}(\{\epsilon_t^{\top} A \epsilon_t, t \in [n]\}) = Q_{1-\alpha}(\epsilon^{\top} A \epsilon),  
\)
where $Q_{1-\alpha}(\cdot)$, the $1-\alpha$ quantile of $\epsilon^{\top} A \epsilon$ is assumed to be continuous.

In the limiting case, we formulate the following minimization problem, presented in Theorem~\ref{thm:precision}, and use its solution as the guiding criterion for selecting the matrix $A$:
\begin{theorem}[Efficiency]  
\label{thm:precision}  
There exists $0<\alpha_0<1$, such that when $\alpha<\alpha_0$, the optimal solution to the minimization problem is given by:  
\begin{equation}
\label{eq:eff}  
    A_* \coloneqq \arg \min_{A\succ 0} V(A,Q_{1-\alpha}(\epsilon^{\top}A\epsilon)),  
\end{equation}
where $\epsilon \sim \mathcal{N}(0, A_*^{-1})$.  
\end{theorem}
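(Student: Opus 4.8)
The plan is to recognize \cref{eq:eff} as a \emph{minimum-volume confidence set} problem and to solve it with the classical Neyman--Pearson ``water-filling'' characterization of such sets. I read the statement in the idealized regime of \cref{eq:op0}--\cref{eq:eff}: $\hat\epsilon_t=\epsilon_t$, the empirical quantile is replaced by the population quantile $Q_{1-\alpha}(\cdot)$, and the error is Gaussian, $\epsilon\sim\mathcal{N}(0,\Sigma)$ with $\Sigma\succ0$; under this reading the assertion ``$\epsilon\sim\mathcal{N}(0,A_*^{-1})$'' is precisely the claim that the optimizer is $A_*=\Sigma^{-1}$, the precision matrix of the errors.

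First I would rewrite the objective geometrically. For $A\succ0$ put $\mathcal{E}_A:=\{e\in\mathbb{R}^I : e^\top A e\le Q_{1-\alpha}(\epsilon^\top A\epsilon)\}$. The substitution $e=A^{-1/2}u$ turns $\mathcal{E}_A$ into a Euclidean ball of squared radius $Q_{1-\alpha}(\epsilon^\top A\epsilon)$ with Jacobian $\det(A)^{-1/2}$, so $\mathrm{vol}(\mathcal{E}_A)=\tfrac{\pi^{I/2}}{\Gamma(I/2+1)}\,Q_{1-\alpha}(\epsilon^\top A\epsilon)^{I/2}\det(A)^{-1/2}=V\!\big(A,Q_{1-\alpha}(\epsilon^\top A\epsilon)\big)$, exactly the quantity in \cref{eq:volume}. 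Moreover, since for a non-degenerate Gaussian $\epsilon$ and $A\succ0$ the quadratic form $\epsilon^\top A\epsilon$ is a positive combination of independent $\chi^2_1$ variables and hence has a continuous CDF, $\mathbb{P}(\epsilon\in\mathcal{E}_A)=\mathbb{P}(\epsilon^\top A\epsilon\le Q_{1-\alpha}(\epsilon^\top A\epsilon))=1-\alpha$ exactly. Thus \cref{eq:eff} is the problem of minimizing $\mathrm{vol}(\mathcal{E})$ over the family of origin-centered ellipsoids $\{\mathcal{E}_A:A\succ0\}$, every member of which carries $\epsilon$-probability $1-\alpha$.

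Next I would drop the ellipsoidal restriction and apply the minimum-volume lemma: among \emph{all} measurable $S\subseteq\mathbb{R}^I$ with $\mathbb{P}(\epsilon\in S)\ge1-\alpha$, $\mathrm{vol}(S)$ is minimized by a density super-level set $S^\star=\{e:p_\epsilon(e)\ge c\}$ with $c>0$ (the Gaussian density is everywhere positive) chosen so that $\mathbb{P}(\epsilon\in S^\star)=1-\alpha$. This is the standard slicing argument: for feasible $S$, from $\int_{S\setminus S^\star}p_\epsilon-\int_{S^\star\setminus S}p_\epsilon=\int_S p_\epsilon-\int_{S^\star}p_\epsilon\ge0$ together with $p_\epsilon<c$ on $S\setminus S^\star$ and $p_\epsilon\ge c$ on $S^\star\setminus S$, one obtains $\mathrm{vol}(S\setminus S^\star)\ge\mathrm{vol}(S^\star\setminus S)$, hence $\mathrm{vol}(S)\ge\mathrm{vol}(S^\star)$. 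For $\epsilon\sim\mathcal{N}(0,\Sigma)$ the density $p_\epsilon(e)\propto\exp(-\tfrac12 e^\top\Sigma^{-1}e)$ is strictly decreasing in $e^\top\Sigma^{-1}e$, so its super-level sets are exactly the ellipsoids $\{e:e^\top\Sigma^{-1}e\le\rho\}$; since $\epsilon^\top\Sigma^{-1}\epsilon\sim\chi^2_I$, the one of mass $1-\alpha$ has $\rho=Q_{1-\alpha}(\epsilon^\top\Sigma^{-1}\epsilon)$, i.e. $S^\star=\mathcal{E}_{\Sigma^{-1}}$. Combining, for every $A\succ0$ we get $V\!\big(A,Q_{1-\alpha}(\epsilon^\top A\epsilon)\big)=\mathrm{vol}(\mathcal{E}_A)\ge\mathrm{vol}(S^\star)=\mathrm{vol}(\mathcal{E}_{\Sigma^{-1}})=V\!\big(\Sigma^{-1},Q_{1-\alpha}(\epsilon^\top\Sigma^{-1}\epsilon)\big)$, so $A_*=\Sigma^{-1}$ solves \cref{eq:eff}, and $A_*^{-1}=\Sigma=\Cov(\epsilon)$, which is the claimed $\epsilon\sim\mathcal{N}(0,A_*^{-1})$. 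I would close this step with the remark that the objective is invariant under $A\mapsto cA$ for $c>0$ (both $Q_{1-\alpha}(\epsilon^\top A\epsilon)^{I/2}$ and $\det(A)^{-1/2}$ rescale, and in fact $\mathcal{E}_{cA}=\mathcal{E}_A$), so the optimal ellipsoid is unique while the optimal matrix is pinned down only up to a positive scalar, with $A_*=\Sigma^{-1}$ the representative matching the normalization in the statement.

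The main obstacle is conceptual rather than computational: one must notice that in \cref{eq:eff} the matrix $A$ simultaneously fixes the \emph{shape} of the ellipsoid and, through the quantile, its \emph{radius}, so that every feasible $A$ encodes a level-$(1-\alpha)$ region --- which is exactly what licenses the Neyman--Pearson reduction. A head-on spectral approach (diagonalize $\Sigma^{1/2}A\Sigma^{1/2}=U\,\mathrm{diag}(d_1,\dots,d_I)\,U^\top$ and minimize $Q_{1-\alpha}\!\big(\sum_i d_iW_i^2\big)\big/\big(\prod_i d_i\big)^{1/I}$ over $d\succ0$) is feasible but forces one to establish Schur-/log-convexity of the weighted-$\chi^2$ quantile as a function of the weights, which is the genuinely delicate step the set-theoretic argument bypasses.
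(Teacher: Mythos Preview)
Your proof is correct and takes a genuinely different route from the paper's. The paper proceeds spectrally: it Cholesky-factors the covariance as $A_*^{-1}=LL^\top$, changes variables to $B=L^\top A L$ so that the Gaussian becomes standard, diagonalizes $B$, and then reduces \cref{eq:eff} to minimizing $Q_{1-\alpha}^{I}\!\bigl(\sum_i\lambda_i x_i^2\bigr)\prod_i\lambda_i^{-1/2}$ over the eigenvalues $\lambda_i>0$ with $\max_i\lambda_i\le1$. The optimum $\lambda_1=\cdots=\lambda_I$ is then obtained by a ``pairwise smoothing'' (majorization) step, appealing to the rearrangement inequality to argue that equalizing any pair $(\lambda_i,\lambda_j)$ does not increase the weighted-$\chi^2$ quantile while strictly increasing $\prod_i\lambda_i$. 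In other words, the paper does exactly the head-on spectral computation you sketch in your final paragraph and flag as the delicate route.

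Your Neyman--Pearson argument sidesteps that delicacy entirely: by recognizing that every $\mathcal{E}_A$ is a level-$(1-\alpha)$ region and that the volume functional in \cref{eq:volume} is literally Lebesgue measure, you reduce to the classical minimum-volume-confidence-set lemma, whose optimizer for a Gaussian is the precision-matrix ellipsoid. This buys you two things the paper's proof does not: it avoids having to justify Schur-concavity of the weighted-$\chi^2$ quantile (which the paper asserts by citation rather than proves), and it yields the stronger conclusion that $\mathcal{E}_{\Sigma^{-1}}$ is optimal among \emph{all} measurable $(1-\alpha)$-regions, not just ellipsoidal ones. The paper's approach, on the other hand, is more self-contained in that it stays within linear-algebraic manipulations and would adapt more readily if one wanted to impose additional structural constraints on $A$ beyond positive-definiteness. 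Your closing remark on scale-invariance (the optimizer is determined only up to $A\mapsto cA$) matches the paper's own Remark following the theorem.
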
 
\begin{remark}  
The optimization problem is invariant to scalar rescaling (\ie, $A$ and any positive scalar multiple $cA$, where $c > 0$, yield the same mathematical solution). Empirically, we find $\alpha_0>0.2$ for Gaussian noises when $I\leq 30$, ensuring that the  result is applicable to conformal prediction settings with typical coverage levels of $90\%$ or $95\%$ in the focused setting.
The analysis can go beyond the Gaussian noise assumption for $\epsilon$ and be extended to broader distributions that satisfy appropriate tail-bound conditions.  
\end{remark}
\vspace{-0.1in}
Theorem~\ref{thm:precision} underscores the importance of selecting $A$ optimally in \cref{eq:S} and highlights that accurately estimating the inverse of the error covariance matrix is key to minimizing inefficiency in CP.  
In practice, designing an optimal $A_*$ is often challenging due to empirical limitations. For example, the estimated residuals $\hat{\epsilon}_t$ may deviate significantly from the true errors $\epsilon_t$. Additionally, when the sample size $n$ is small, the empirical CDF may differ considerably from the true CDF. Despite these challenges, constructing $A$ based on an estimate of the inverse covariance matrix offers substantial improvements in high-dimensional settings compared to CP methods that ignore variable dependencies, such as those that set $A$ as the identity matrix.

\vspace{-.15in}
\section{Experiments}
\vspace{-.15in}
To demonstrate the suitability of \texttt{STACI}, we evaluate its performance on both synthetic data with a stationary covariance matrix and real-world data with time-varying covariance. By default, the first $60$\% of observations are used for training, the calibration set consists of the most recent $n = 500$ observations, and the test contains the sequentially revealed observations $n'=5000$ in simulation and $n' = 5000$ in real study. The desired confidence rate $\alpha$ is fixed at $0.95$. Our method is compared against the following conformal prediction and learning-based uncertainty quantification baselines: ($i$) \textbf{Sphere}: Spherical confidence set, where the covariance matrix is an identity matrix. In other words, the prediction error at different locations are not considered to have correlations.
($ii$) \textbf{Sphere-ACI} ($\gamma=0.01$): Spherical confidence set with adaptive conformal inference (ACI).
($iii$) \textbf{Square}: Square confidence set. This equals to computing different nonconformity scores for each dimension, and then calibrate accordingly.
($iv$) \textbf{GT}: Ellipsoidal confidence set using the ground-truth covariance matrix.
($v$) \textbf{MultiDimSPCI}: Ellipsoidal confidence set using the sample covariance matrix~\citep{xuconformal}, alongside its localized variant using the most recent observations, \textbf{MultiDimSPCI (local)}.
($vi$) \textbf{CopulaCPTS}: Prediction region based on modeling the joint distribution of forecast errors with a copula function~\citep{sun2023copula}.
($vii$) \textbf{DeepSTUQ}: A Bayesian deep learning model that quantifies uncertainty in spatio-temporal graphs by using graph convolutions and Monte Carlo dropout~\citep{qian2023uncertainty}.

We consider both validity and efficiency to evaluate the uncertainty quantification performance: 
($i$) \emph{Coverage} quantifies the likelihood that the prediction set includes the true target, \ie, $\text{Coverage}:=\sum_{t \in \mathcal{D}_\text{test}} \mathds{1}\{Y_{t} \notin \mathcal{C}(X_{t};\alpha_{t})\} \big/ n'$; 
($ii$) \emph{Efficiency} is evaluated based on the size (or volume) of the prediction set, with smaller sets indicating higher efficiency. The volume of the prediction set, $\text{Vol}(\mathcal{C}(X_{t};\alpha_t))$, is measured by the size of the ellipsoid determined by $A$. Formally, $\text{Efficiency}:=\sum_{t \in \mathcal{D}_\text{test}} \Big(\text{Vol}(\mathcal{C}(X_t;\alpha_t))\Big)^{1/I} \big/ n'$. An optimal method should achieve the predefined coverage with high efficiency/low inefficiency.

\begin{figure}
    \centering
    \subfigure[Comparison of STACI with baselines.]{\includegraphics[width=0.44\textwidth]{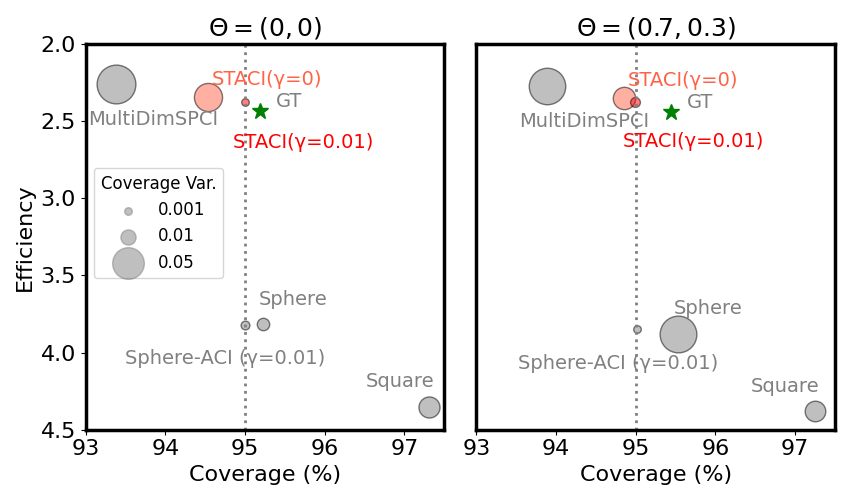} \label{fig: comp_syn}} 
    \subfigure[Impact of $\lambda$ on STACI.]{\includegraphics[width=0.53\textwidth]{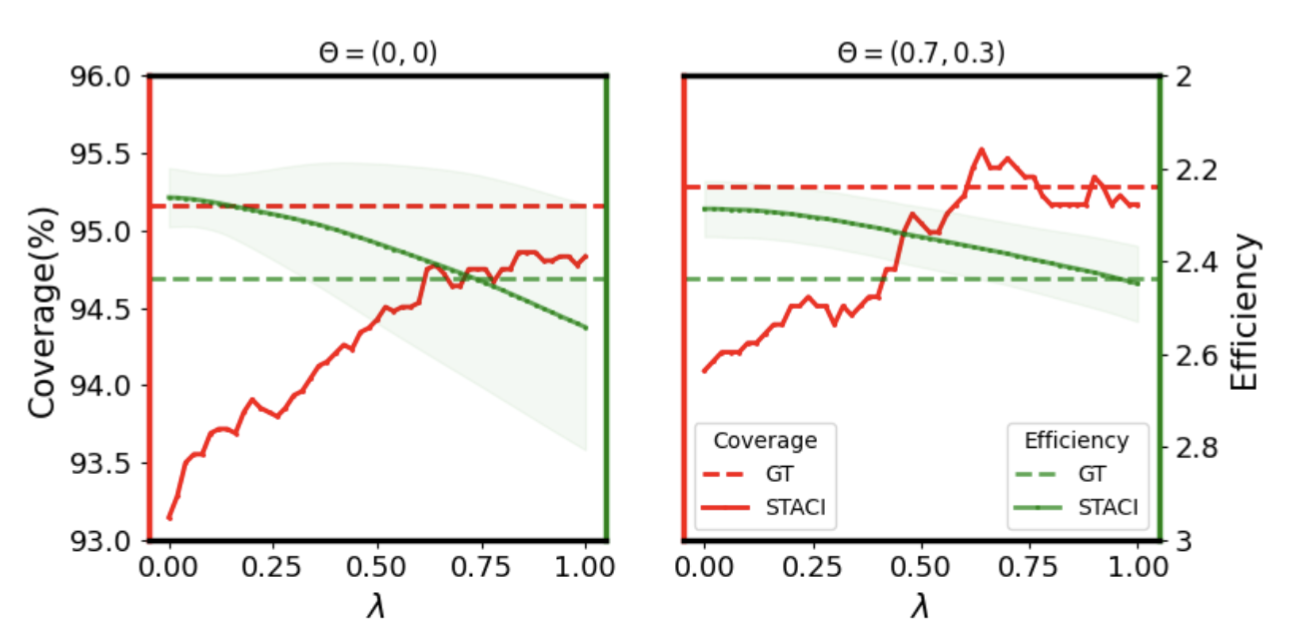} \label{fig:comparison(0.7,0.3)}} 
    \caption{Experiment results of synthetic data: (a) Comparison of methods on synthetic datasets with different tail-up parameters $\Theta$ over coverage ($x$-axis) and efficiency ($y$-axis). (b) Trade-off between coverage and efficiency on synthetic data, where the higher the better performance.
    }
    \label{fig: syn}
\end{figure}

\vspace{-0.15in}
\subsection{Simulation}
\vspace{-0.1in}

In this section, we conduct simulation experiments on synthetic data generated by a tail-up model.
Specifically, we follow \citep{peterson2010mixed} and construct the stream network as shown in Figure~\ref{fig:stream}. The details of synthetic network is provided in Section D.1 of the Appendix. 
We generate the observation of site $u$ at time point $t$ by simulating stochastic integration from all upstream points $r \in \wedge u$ to downstream point $u$ according to \cref{eq:tailup}, where we set $\mu_t(u) = \sum_{i=1}^w \theta_iY_{t-i}(u)$ following the $\text{AR}(w)$ structure and $m(\Delta)=\exp(-\Delta)$ as the exponential moving average function. The process is repeated until $5000$ time steps.
This experiment simulates the stream network data without any misspecification.
\vspace{-.1in}
\paragraph{Experiment Configuration}
In synthetic data, the prediction model $f$ is simply a linear regression model. We first estimate parameters of in $\text{AR}(w)$ structure, i.e., $\Theta=(\theta_i)_{i\in[w]}$, through linear regression and then parameters in \cref{eq:cov_gt}, $\phi$ and $\sigma^2$ through $\ell_1$-loss. Parameters of $\Theta = (0,0)$ and $\Theta = (0.7, 0.3)$ are selected for data generation. When $\Theta = (0,0)$, the observations consist of pure noise, thus stationary; when $\Theta = (0.7, 0.3)$, the process resembles a second-order autoregressive model. The weighting factor $\lambda$ is set to $0.6$.

\vspace{-.1in}
\paragraph{Results} Our numerical results demonstrate that our method enhance the predictive efficiency significantly without sacrificing the coverage guarantee, by considering both sample-based and topology-based covariance.
From Figure~\ref{fig: comp_syn}, we observe that CP methods employing ellipsoidal uncertainty sets tend to cluster towards the upper region of the plot, indicating higher efficiency compared to CP methods based on spherical or square uncertainty sets. Although MultiDimSPCI achieves the lowest efficiency, its coverage drops significantly below the required threshold, highlighting its instability when relying solely on the sample covariance matrix. This issue persists even in simulated data designed to align with its error process assumptions. Its local variant, designed to capture temporal correlations, provides only a marginal improvement and similarly fails to achieve the required coverage.
In contrast, with $\lambda$ fixed at $0.6$, our method \texttt{STACI} is positioned near the upper-right corner alongside GT, which leverages the ground-truth covariance matrix. This suggests that our method achieves performance comparable to GT, balancing low efficiency (smaller volume) while maintaining the necessary coverage guarantees. Among all methods that surpass the coverage threshold, our method, \texttt{STACI}($\gamma=0.01$), demonstrates the best efficiency with the smallest variance, further reinforcing its robustness and effectiveness.


Figure~\ref{fig:comparison(0.7,0.3)} reveals a clear trade-off trend between coverage and efficiency: the higher $\lambda$, the confidence level rises, but efficiency is worse. This suggests that $\lambda$ should be carefully chosen: if too large, our method over-relies on topology and fails to adapt to covariance shift; if too small, it depends more on sample covariance matrices, which are purely data-driven and thus unstable, leading to a coverage drop. Nonetheless, no matter whether adapting confidence level, setting a larger $\lambda$ in \texttt{STACI} can efficiently increase coverage and maintain it near the pre-determined level, while only slightly increasing volume, which remains comparable to GT.

\vspace{-.15in}
\subsection{Real Data Study}
\vspace{-.1in}
We further conduct experiments on a real-world traffic dataset,
Performance Measurement System (PeMS)~\cite{chen2001freeway}, which contains the data collected from the California highway network, providing $5$-minute interval traffic flow counts by multiple sensors, along with flow directions and distances between sensors. To model it into stream network, we also rely on \cite{CaDot_PEMS} to check accurate road connection information. We select two highway forks, each equipped with $12$ sensors, named PeMS-G1 and PeMS-G2, and plot their locations and corresponding road segments in Figure~\ref{fig: G_real} in the Appendix.

\vspace{-.1in}
\paragraph{Experiment Configuration}
We adopt Adaptive Graph Convolutional Recurrent Network (AGCRN)~\cite{bai2020adaptive} as the default backbone model $f$. To demonstrate STACI’s generality under post-hoc conformal prediction framework, we evaluate over alternative GNN backbones, including attention-based ASTGCN \cite{Guo_Lin_Feng_Song_Wan_2019} and continuous-time STGODE \cite{fang2021spatial}. We set our default $\lambda = 0.6$. For simplicity, we only use fixed weights with all equal values, without requiring any additional information. 
Multiple hyperparameter and ablation study are also provided over the key parameters in our framework: ($i$) $\lambda$ from $0$ to $1$ with step of $0.02$; ($ii$) $n = 100, 200, 300, 400, 500$; ($iii$) $\gamma = 0$ or $0.01$.

\begin{table}[]
\centering
\caption{Comparison of Different CP Methods over Different GNN Models for two PeMS Datasets. Coverage below 94.5\% is italicized. Inefficiency is reported as mean ± standard deviation, with the lowest value in bold.}
\label{tab:PeMScomparison}
\resizebox{0.9\textwidth}{!}{%
\begin{tabular}{@{}cccccccc@{}}
\toprule
\multirow{2}{*}{Dataset} & Backbone Models & \multicolumn{2}{c}{AGCRN} & \multicolumn{2}{c}{ASTGCN} & \multicolumn{2}{c}{STGODE} \\ \cmidrule(l){2-8} 
                      & CP Methods      & Coverage   & Efficiency $\downarrow$  & Coverage & Efficiency $\downarrow$ & Coverage & Efficiency $\downarrow$ \\ \midrule
\multirow{6}{*}{PeMS-G1} & Sphere                                          &  $97.76$ &	$133.60 \pm	12.79$  &  $96.64$ &	$128.23 \pm	14.2$  &  $96.09$ &	$145.40 \pm	7.79$  \\
                         & Sphere-ACI ($\gamma=0.01$)                      & $95.26$ &	$108.93 \pm	15.96$  &  $95.24$ &	$114.64 \pm	19.54$  &  $95.03$	& $136.36 \pm	19.47$  \\
                         & Square                                          &  $95.98$ &	$155.84 \pm	23.92$  &  $96.41$ &	$155.62 \pm	23.74$  &  $96.38$ &	$172.30 \pm 23.51$  \\
                         & \textit{MultiDimSPCI}          &  $\textit{92.92}$	&$ 73.82 \pm 8.16$  &  $\textit{93.19}$ &	$74.68 \pm	7.67$  & $ \textit{92.70}$ &	$88.48 \pm	4.26$  \\ 
                         & \textit{MultiDimSPCI (local)}          &  
                         $93.04$	&$ 74.23 \pm 8.27$  & 
                         $93.57$ &	$75.18 \pm 7.69$  & 
                         $92.98$ &	$88.99 \pm 4.26$  \\ 
                         & \textbf{STACI} ($\gamma=0$)   & $97.12$ & $88.27 \pm 21.73$ &  $96.76$ &	$88.1 \pm	19.07$ & $95.31$ & $77.34 \pm	7.09$  \\
                         & \textbf{STACI} ($\gamma=0.01$) & $95.75$ & \textbf{67.54 $\pm$ 10.94} &  $95.54$ &	\textbf{67.92 $\pm$	10.22} & $95.14$ & \textbf{73.62 $\pm$	9.83}  \\ \midrule
\multirow{6}{*}{PeMS-G2} & Sphere                                          & $ 96.26$ &	$144.55 \pm	14.22$  &  $95.64$ &	$130.69 \pm	11.55$  &  $95.83$ &	$	147.13 \pm	14.68$  \\ 
                         & Sphere-ACI ($\gamma=0.01$)                      &  $95.03$ &	$133.63 \pm	14.18$  &  $95.07$ &	$122.72 \pm	16.67$  & $ 95.14$	& $122.34 \pm	19.72$  \\
                         & Square                                          &  $95.26$ &	$172.18 \pm 19.43$  &  $95.37$ &	$160.60 \pm	19.95$  &  $95.12$ &	$138.97 \pm 17.78$  \\
                         & \textit{MultiDimSPCI}                                     & $ \textit{91.14}$ & $101.42 \pm 7.71$  &  $\textit{90.93}$ &	$92.12 \pm 6.67$  &  $\textit{90.74}$ &	$107.23 \pm	7.34$  \\
                         & \textit{MultiDimSPCI (local)}          &  
                         $91.44$	&   $ 102.25 \pm 7.90$  & 
                         $91.12$ &	$92.84 \pm 6.95$  & 
                         $91.10$ &	$107.96 \pm 7.58$  \\ 
                         & \textbf{STACI}($\gamma=0$)                               & $95.39$  &	$73.36 \pm 10.35$ &  $95.18$ &	$60.45 \pm 9.22$  & $95.33$ & $77.14 \pm 8.65$   \\ 
                         & \textbf{STACI}($\gamma=0.01$)                            &  $95.01$ & \textbf{69.62 $\pm$	12.85}  &  $95.05$	& \textbf{58.07 $\pm$ 9.83}  &  $95.20$ &	\textbf{74.86 $\pm$	9.63}\\ \bottomrule
\end{tabular} 
}
\vspace{-.1in}
\end{table}

\vspace{-.1in}
\paragraph{Result} Table \ref{tab:PeMScomparison} clearly demonstrates that our method consistently outperforms all baseline CP techniques across diverse backbone models and road topologies, achieving significantly higher coverage and superior efficiency. Further comparisons against CopulaTSCP and DeepSTUQ are detailed in Appendix~\ref{sec:main_experiment_continued}, as those methods are less fitted for our problem setting.


Our method exhibits robustness to the choice of the hyperparameter $\lambda$. In the first two subplots of Figure \ref{fig: pems_T_lambda}, setting $\lambda=0$ reduces our algorithm to sole usage of sample covariance matrix, equivalent to our strongest baseline, MultiDimSPCI.
Our coverage is greater than MultiDimSPCI with arbitrary hyperparameter $\lambda$. Further, across all calibration sample sizes $n$, selecting any $\lambda \in [0.3,0.9]$ consistently yields both higher coverage and greater(lower) efficiency. This underscores the critical contribution of topological information and demonstrates \texttt{STACI}’s insensitivity to $\lambda$.

The incorporation of topology-induced covariance matrix is pivotal even without ACI ($\gamma = 0$), as shown in the last two subplots in Figure \ref{fig: pems_T_lambda}. \texttt{STACI} can obviously lift coverage from under $87$\% to surpass the desired $95$\% level with only a marginal efficiency cost. This indicates that under inherent temporal covariance shifts, utilizing topological information offers a robust remedy to under-coverage problem while maintaining highly informative predictions.
Moreover, as demonstrated in Section~\ref{sec: real-world details} in the Appendix, 
 \texttt{STACI} can also outperform in an offline setting, where the covariance matrix is estimated once and then held fixed at the beginning of test time. We discuss the scalability of \texttt{STACI} in term of time length and graph size in Section~\ref{sec:limitation} and in Appendix D.7, respectively. 


In conclusion, the estimate of the covariance matrix can benefit from both topology and samples, compared with relying on any single resource. 
On one hand, with limited finite calibration sample $n$, the topology-based estimator offers a more stable structure as it possess fewer parameters. It can alleviate the temporal distribution shift and the resulting under-cover problem, consistent with our theoretical analysis. The sample covariance, on the other hand, captures the actual spatial patterns from samples in the calibration data and gives higher efficiency, but it can lose coverage guarantee if the distribution changes. By blending these two estimates and adaptively adjusting the significance level, \texttt{STACI} can effectively maintain desired coverage and smaller volumes.


\begin{figure}[h]
    \centering
    \begin{minipage}{0.24\textwidth}
        \centering
        \includegraphics[width= \linewidth]{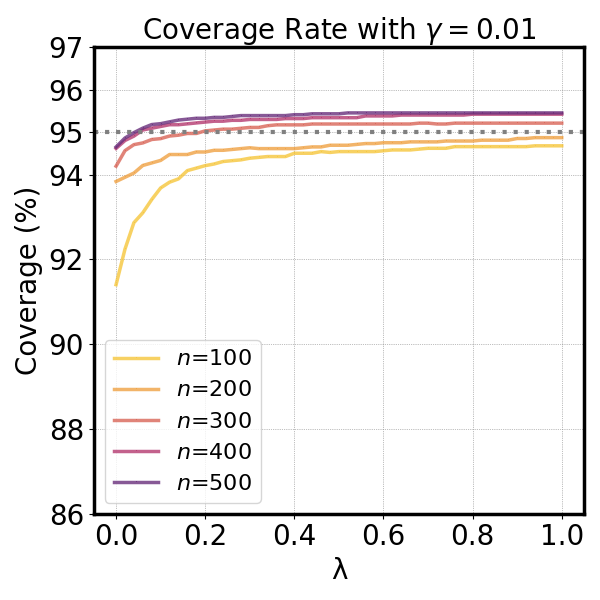}
        \label{fig:picp}
        \vspace{-1em}
    \end{minipage}
    \hfill
    \begin{minipage}{0.24\textwidth}
        \centering
        \includegraphics[width= \linewidth]{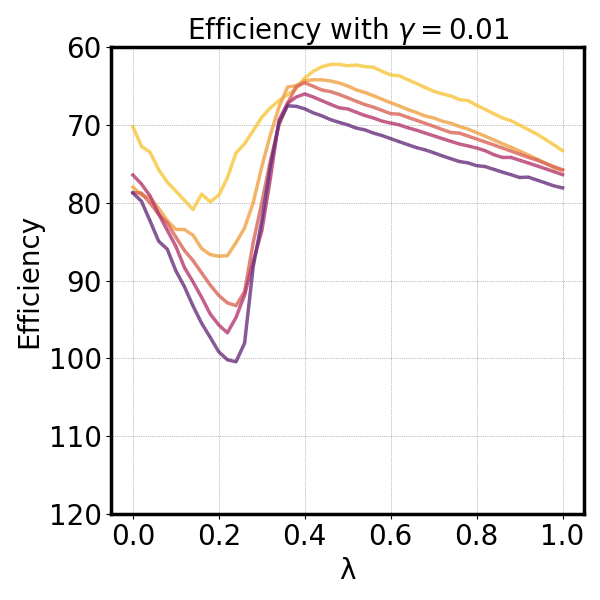}
        \label{fig:volume}
        \vspace{-1em}
    \end{minipage}
        \begin{minipage}{0.24\textwidth}
        \centering
        \includegraphics[width= \linewidth]{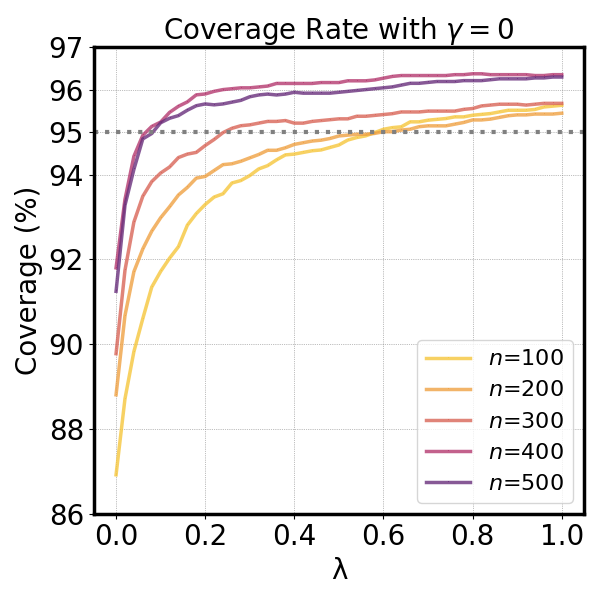}
        \label{fig:picp2}
        \vspace{-1em}
    \end{minipage}
    \hfill
    \begin{minipage}{0.24\textwidth}
        \centering
        \includegraphics[width= \linewidth]{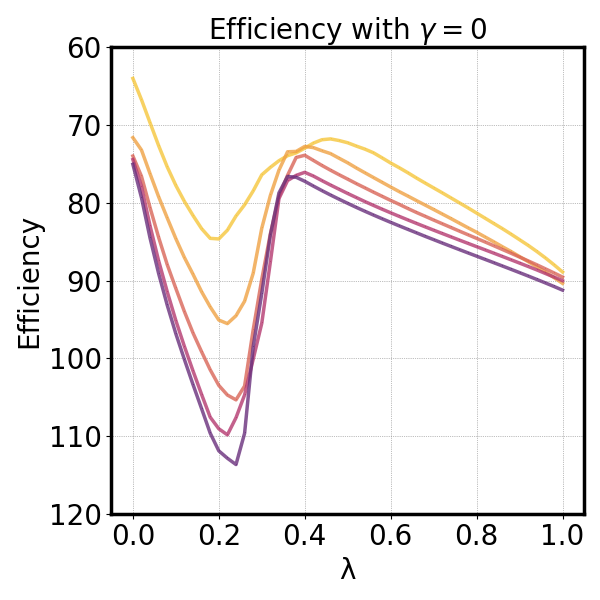}
        \label{fig:volume2}
        \vspace{-1em}
    \end{minipage}
    \caption{Comparison of Coverage and Efficiency for PeMS data with different belief weight $\lambda$ and calibration set size $n$, with adaptive step size $\gamma = 0.01$ (upper) and $0$ (lower). The pre-determined coverage threshold of $95$\% is shown by a horizontal gray dotted line. 
    }
    \label{fig: pems_T_lambda}
    \vspace{-.1in}
\end{figure}

\vspace{-.15in}
\section{Limitation}\label{sec:limitation}
\vspace{-.15in}
\texttt{STACI} targets joint uncertainty quantification on \emph{moderate}-size subgraphs ($2\sim30 $D).
This choice is deliberate: performing UQ in very high-dimensional output spaces (hundreds of
coordinates or more) is statistically ill-posed. Directly constructing geometric prediction
sets in the full output space is vulnerable to the curse of dimensionality~\cite{verleysen2005curse}. In particular,
hyper-rectangles formed by marginal intervals can inflate to near-vacuous volumes even at
low dimensions as in Table~\ref{tab:PeMScomparison}.

In high-dimensional regimes, common practice is to avoid explicit geometric sets and instead
use \emph{generative} or \emph{sampling}-based UQ~\cite{le2016sampling,bohm2019uncertainty}. For instance, image UQ, a prototypical
high-dimensional setting, typically samples plausible outcomes from a learned posterior~\cite{ekmekci2023quantifying}
rather than delineating a set in pixel space. In the same spirit, STACI focuses on subgraphs
of manageable joint dimensionality so that (i) the resulting joint sets remain informative,
(ii) the coverage--efficiency trade-off is non-degenerate. This design also matches how practitioners assess risk: traffic engineers
often analyze corridors with 10--30 sensors rather than entire city-wide networks, and
hydrologists frequently study clusters of $\sim$20 gauges when evaluating localized flood risk.
\vspace{-.15in}
\section{Conclusion}
\vspace{-.15in}
We proposed \texttt{STACI}, an adaptive conformal prediction framework for stream networks. 
Theoretically, we established coverage guarantees and demonstrated the model’s ability to minimize inefficiency under mild conditions. Empirically, \texttt{STACI} produced smaller prediction sets while maintaining valid coverage across both (stationary) simulated data and (non-stationary) real-world traffic data.

Future work includes three potential directions. (i) \texttt{STACI} can be extended to general spatio-temporal graphs by replacing $\Sigma_{\mathcal{G}}$ with alternative network parameterizations, enabling the development of novel methods that effectively exploit topological structures in broader spatio-temporal settings; (ii) stronger theoretical guarantees such as finite-sample coverage bounds when adaptively calibrating the significance level, could be developed by imposing assumptions on error distribution shifts (e.g., first-order differencing stationarity); and (iii) the present formulation does not aim to provide full-graph joint sets over hundreds of nodes. One might build corridor-level joint sets and composing them with rigorous
controls on cross-correlation to scale coverage beyond the subgraph level.

\newpage
\section*{Acknowledgements}

We thank Aravindan Vijayaraghavan, Shuwen Chai, Yifan Wu for helpful discussions. We also thank anonymous reviewers for constructive feedback.

\bibliographystyle{plain}  
\bibliography{reference}


\newpage
\appendix
\tableofcontents
\newpage
\section{Theoretical proofs}\label{sec:proof}
\subsection{Proof of Theorem 1}
\textbf{Theorem 1}(validity).
Under the assumptions stated above, the proposed method satisfies the following conditional coverage guarantee:  
\begin{align*}  
 \left| \mathbb{P}(Y_{T+1} \in \mathcal{C}_{T+1}(\alpha)|X_{T+1}=x) - (1 - \alpha) \right| 
\leq  (4L+2L\sqrt{\omega}+2)\sqrt{\omega}+6\sqrt{\frac{\log(16n)}{n}}+\frac{\log(16n)}{n},
\end{align*}
where 
\[
\omega = \nu_n^2r+2r\nu_n\sqrt{(\kappa_1+\sqrt{\kappa_2})I}+o(g(n))(\kappa_1+\sqrt{\kappa_2})I.
\]
 
For easy notation, denote $\hat{A}=A_n,\Delta_t=\hat{\epsilon}_t-\epsilon_t$ and sometimes we drop subscript $t$. 
\begin{lemma}\label{lemma:s}For any test conformity score $\hat{s_t}=\hat{\epsilon_t}^T\hat{A}\hat{\epsilon_t}$ and the true conformity score $s_t=\epsilon_t^TA\epsilon_t$, with probability  at least $1-\delta$,\begin{equation}\label{eq:l1}\sum_{t=T-n+1}^T|\hat{s}_t-s_t|\leq \omega n,\end{equation}
where \[
\omega=\nu_n^2r+2r\nu_n\sqrt{(\kappa_1+\sqrt{\kappa_2})I}+o(g(n))(\kappa_1+\sqrt{\kappa_2})I.\]
\end{lemma}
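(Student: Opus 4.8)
The plan is to bound $|\hat{s}_t - s_t|$ pointwise, sum over the calibration window, and then apply a concentration argument to replace the population quantities $\kappa_1, \kappa_2$ with their empirical counterparts. First I would write the telescoping identity
\[
\hat{s}_t - s_t = \hat{\epsilon}_t^\top \hat{A}\hat{\epsilon}_t - \epsilon_t^\top A \epsilon_t = \underbrace{\hat{\epsilon}_t^\top \hat{A}\hat{\epsilon}_t - \epsilon_t^\top \hat{A} \epsilon_t}_{\text{error from }\Delta_t} + \underbrace{\epsilon_t^\top(\hat{A}-A)\epsilon_t}_{\text{error from }A_n \to A},
\]
and handle the two terms separately. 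For the first term, using $\hat{\epsilon}_t = \epsilon_t + \Delta_t$ and expanding, we get $\hat{\epsilon}_t^\top \hat{A}\hat{\epsilon}_t - \epsilon_t^\top \hat{A}\epsilon_t = 2\epsilon_t^\top \hat{A}\Delta_t + \Delta_t^\top \hat{A}\Delta_t$, which by Cauchy–Schwarz and the operator-norm bound is at most $\|\hat{A}\|\,(2\|\epsilon_t\|\,\|\Delta_t\| + \|\Delta_t\|^2)$. For the second term, $|\epsilon_t^\top(\hat{A}-A)\epsilon_t| \le \|\hat{A}-A\|\,\|\epsilon_t\|^2 = o(g(n))\|\epsilon_t\|^2$ by Assumption~\ref{assump:convergence}.

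Next I would sum over $t = T-n+1, \dots, T$ and normalize by $n$. The $\|\Delta_t\|^2$ contribution is controlled directly by Assumption~\ref{assump:estimate}, giving $\frac{1}{n}\sum_t \|\Delta_t\|^2 \le \nu_n^2$. The cross term $\frac{1}{n}\sum_t \|\epsilon_t\|\,\|\Delta_t\|$ is handled by Cauchy–Schwarz in the form $\frac{1}{n}\sum_t \|\epsilon_t\|\|\Delta_t\| \le \big(\frac{1}{n}\sum_t \|\epsilon_t\|^2\big)^{1/2}\big(\frac{1}{n}\sum_t \|\Delta_t\|^2\big)^{1/2} \le \nu_n \big(\frac{1}{n}\sum_t \|\epsilon_t\|^2\big)^{1/2}$, and the remaining term needs $\frac{1}{n}\sum_t \|\epsilon_t\|^2$. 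At this point the bound on $\|\epsilon_t\|$ and the variance bound from Assumption~\ref{assump:eps} enter: I want to show that with probability at least $1-\delta$, $\frac{1}{n}\sum_t \|\epsilon_t\|^2 \le (\kappa_1 + \sqrt{\kappa_2})I$ (or something of that order), so that the $\sqrt{(\kappa_1+\sqrt{\kappa_2})I}$ factors in the statement of $\omega$ appear. Since $\mathbb{E}\|\epsilon_t\|^2 \le \kappa_1 I$ (from $\|\epsilon_t\| \le \kappa_1 I$ a.s. — or more carefully from a first-moment bound) and $\mathrm{Var}(\|\epsilon_t\|^2) \le \kappa_2 I$, a Chebyshev or Bernstein-type inequality on the i.i.d.\ sum $\frac{1}{n}\sum_t \|\epsilon_t\|^2$ gives concentration around its mean within $\sqrt{\kappa_2 I}$-order fluctuations; absorbing the mean and the fluctuation into $(\kappa_1 + \sqrt{\kappa_2})I$ yields the claim. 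Combining, $\frac{1}{n}\sum_t |\hat{s}_t - s_t| \le r\big(\nu_n^2 + 2\nu_n\sqrt{(\kappa_1+\sqrt{\kappa_2})I}\big) + o(g(n))(\kappa_1+\sqrt{\kappa_2})I = \omega$, using $\|\hat{A}\| = \|A_n\| \le r + o(1) \lesssim r$ (possibly after adjusting constants, since $\|A\|\le r$ and $\|A_n - A\| = o(g(n))$).

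The main obstacle I anticipate is making the concentration step for $\frac{1}{n}\sum_t \|\epsilon_t\|^2$ fully rigorous with the stated $\delta$-dependence and reconciling it with the clean, $\delta$-free-looking form of $\omega$ in the lemma — in particular, whether the intended argument uses the a.s.\ bound $\|\epsilon_t\| \le \kappa_1 I$ to get a deterministic bound $\frac{1}{n}\sum_t\|\epsilon_t\|^2 \le \kappa_1^2 I^2$ (which would not match the $\sqrt{\kappa_2}$ term) or a sharper i.i.d.\ concentration that actually produces the $\sqrt{\kappa_2 I}$ scale with high probability. I would resolve this by invoking a one-sided Bernstein inequality for bounded i.i.d.\ random variables $\|\epsilon_t\|^2$: its mean is $O(\kappa_1 I)$, its variance is $O(\kappa_2 I)$, and the high-probability deviation is of order $\sqrt{\kappa_2 I \log(1/\delta)/n} + \kappa_1 I \log(1/\delta)/n$, all of which is dominated by $(\kappa_1 + \sqrt{\kappa_2})I$ for $n$ large; the $\log(1/\delta)$ and $\log(16n)$ terms then get folded into the Theorem~\ref{thm:valid} bound at the stage where Lemma~\ref{lemma:s} is combined with the DKW/empirical-CDF argument. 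A secondary technical point is tracking the effect of centering by $\bar{\epsilon}_t$ in the definition of $\hat{\epsilon}_t$, but since $\bar{\epsilon}_t \to 0$ it only contributes lower-order terms that can be absorbed into $\nu_n$.
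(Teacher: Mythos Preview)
Your proposal is essentially the paper's proof: the same telescoping $\hat{\epsilon}_t^\top\hat{A}\hat{\epsilon}_t - \epsilon_t^\top A\epsilon_t$ into a $\Delta_t$-piece and an $(A_n-A)$-piece, the same Cauchy--Schwarz bounds, the same summed Cauchy--Schwarz on the cross term, and a concentration argument for $\tfrac{1}{n}\sum_t\|\epsilon_t\|^2$. The one point where the paper is more specific than your proposal is the concentration step: rather than Bernstein, the paper uses plain Chebyshev on the i.i.d.\ sum $\tfrac{1}{n}\sum_t\|\epsilon_t\|^2$ (mean $\le \kappa_1 I$, variance $\le \kappa_2 I/n$) and then chooses $\delta$ relative to $n$ and $I$ so that the fluctuation term $\sqrt{\kappa_2 I/(n\delta)}$ is bounded by $\sqrt{\kappa_2}\,I$, which is exactly how the $\delta$-free form of $\omega$ arises --- so your ``main obstacle'' is resolved by Chebyshev plus a particular $\delta$ choice, not by a sharper inequality.
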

\begin{proof}
We have:
    \begin{align}
    &|\hat{s}_t-s_t|=|\epsilon_t^{\top}A\epsilon_t-\hat{\epsilon_t}^{\top}\hat{A}\hat{\epsilon_t}|\leq |\epsilon_t^{\top}A\epsilon_t-\epsilon_t^{\top}\hat{A}\epsilon_t|+|\epsilon_t^{\top}\hat{A}\epsilon_t-\hat{\epsilon_t}^{\top}\hat{A}\hat{\epsilon_t}|\nonumber\\
\leq&|\Delta^{\top}\hat{A}\Delta|+2|\Delta^{\top}\hat{A}\epsilon_t|+|\epsilon_t^{\top}(A-\hat{A})\epsilon_t|\nonumber\\
\leq&\|\hat{A}\|\|\Delta\|^2+2\|\hat{A}\|\|\Delta\|\|\epsilon\|+\|\epsilon\|^2\|A-\hat{A}\|\tag{i}\label{1}\\
\leq&r\|\Delta\|^2+2r\|\Delta\|\|\epsilon\|+o(g(n))\|\epsilon\|^2.
    \end{align}
The inequality (\ref{1}) exists because of the Cauchy-schwartz inequality and  Assumption 2.
Hence, by Assumption 1, we have 
\begin{align}&\sum_{t=T-n+1}^T  |\hat{s}_t-s_t|\leq {rn\nu_n^2}+2r\sum_{t=T-n+1}^T\|\Delta_t\|\|\epsilon_t\|+o(g(n))\sum_{t=T-n+1}^T\|\epsilon_t\|^2\nonumber\\\leq&rn\nu_n^2+2r\sqrt{(\sum_{t=T-n+1}^T\|\Delta_t\|^2)(\sum_{t=T-n+1}^T\|\epsilon_t\|^2)}+o(g(n))\sum_{t=T-n+1}^T\|\epsilon_t\|^2\nonumber\\\leq&rn\nu_n^2+2r\sqrt{n\nu_n^2(\sum_{t=T-n+1}^T\|\epsilon_t\|^2)}+o(g(n))\sum_{t=T-n+1}^T\|\epsilon_t\|^2.\label{eq:s}
\end{align}

From Assumption 3, we have 
\begin{align}
    \mathbb{E} \left[ \frac{1}{n} \sum_{t=T-n+1}^T \|\epsilon_t\|^2 \right] 
    = \frac{1}{n} \sum_{t=T-n+1}^T \mathbb{E}[\|\epsilon_t\|^2] 
    \leq \kappa_1I.
\end{align}

Using Chebyshev's inequality, we have
\begin{align}
    \mathbb{P} \left( \frac{1}{n} \sum_{t=T-n+1}^T \|\epsilon_t\|^2 - \mathbb{E}[\|\epsilon_t\|^2] \geq \sqrt{\frac{ \, \mathrm{Var}[\|\epsilon_t\|^2]}{n \delta}} \right)
    \leq \frac{\mathrm{Var}[\|\epsilon_t\|^2]}{n \cdot \frac{\, \mathrm{Var}[\|\epsilon_t\|^2]}{n \delta}}
    = \delta,
\end{align}
which means that with probability higher than \(1 - \delta\),
\begin{align}
    \frac{1}{n} \sum_{t=T-n+1}^T \|\epsilon_t\|^2 
    &\leq \mathbb{E}[\|\epsilon_t\|^2] + \sqrt{\frac{\, \mathrm{Var}[\|\epsilon_t\|^2]}{n \delta}} \nonumber \\
    &\leq \kappa_1I + \sqrt{\frac{\kappa_2 I}{n \delta}} \nonumber \\
    &\leq (\kappa_1 + \sqrt{\frac{\kappa_2}{n \delta I}})I\leq(\kappa_1+\sqrt{\kappa_2})I.
\end{align}
The last inequality is because we can set $\delta$ such that $\delta nI<1 $.
Plug into \cref{eq:s}, we have with probability higher than $1-\delta$, we obtain \cref{eq:l1} and the lemma follows. 

\end{proof}
Denote the empirical CDF: $\widehat{F}_{n+1}(x)=\frac{1}{n}\sum_{i=T-n+1}^T 1_{\hat{s}_i\leq x},\widetilde{F}_{n+1}(x)=\frac{1}{n}\sum_{i=T-n+1}^T 1_{s_i\leq x}$ and true CDF of score function $F_s(x)=P(s\leq x)$.

\begin{lemma}\label{lemma: AT}
Under Assumption 3, for any \(n\), there exists an event \(A_n\) which occurs with probability at least \(1 - \sqrt{\frac{ \log(16n)}{n}}\), such that, conditioning on \(A_n\),
\[
\sup_x \left| \widetilde{F}_{n+1}(x) - F(x) \right| \leq \sqrt{ \frac{\log(16n)}{n}}. 
\]
\end{lemma}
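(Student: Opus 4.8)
The statement is a standard Dvoretzky--Kiefer--Wolfowitz (DKW) type concentration bound for the empirical CDF $\widetilde{F}_{n+1}$ built from the \emph{true} nonconformity scores $s_t = \epsilon_t^\top A \epsilon_t$. The plan is to invoke a concentration inequality for the uniform deviation $\sup_x |\widetilde{F}_{n+1}(x) - F_s(x)|$ and then package the resulting high-probability statement as the event $A_n$.

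\medskip

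\noindent\textbf{Step 1: Identify the randomness.} Since the errors $\epsilon_t$ are i.i.d.\ (by the hypothesis of Section~\ref{sec:valid}), the true scores $s_t = \epsilon_t^\top A \epsilon_t$ for $t = T-n+1, \dots, T$ are i.i.d.\ draws from the distribution with CDF $F_s$. The empirical CDF $\widetilde{F}_{n+1}$ is exactly the empirical CDF of these $n$ i.i.d.\ samples. This is the key reduction: no exchangeability-breaking estimation error enters $\widetilde{F}$ (that is handled separately when comparing $\widehat{F}_{n+1}$ to $\widetilde{F}_{n+1}$, presumably in a later lemma using Lemma~\ref{lemma:s}).

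\medskip

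\noindent\textbf{Step 2: Apply the DKW inequality.} The classical DKW--Massart inequality states that for $n$ i.i.d.\ samples,
\[
\mathbb{P}\left( \sup_x \left| \widetilde{F}_{n+1}(x) - F_s(x) \right| > \varepsilon \right) \leq 2 e^{-2 n \varepsilon^2}.
\]
I would set $\varepsilon = \sqrt{\frac{\log(16n)}{n}}$ so that the right-hand side becomes $2 e^{-2\log(16n)} = 2 (16n)^{-2} = \frac{2}{256 n^2} = \frac{1}{128 n^2}$. One then defines $A_n$ to be the complementary event $\{\sup_x |\widetilde{F}_{n+1}(x) - F_s(x)| \leq \sqrt{\log(16n)/n}\}$, which occurs with probability at least $1 - \frac{1}{128 n^2}$. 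Since $\frac{1}{128 n^2} \leq \sqrt{\frac{\log(16n)}{n}}$ for all $n \geq 1$ (the left side decays polynomially faster and is tiny even at $n=1$), the stated probability bound $1 - \sqrt{\log(16n)/n}$ follows a fortiori. The role of Assumption~\ref{assump:eps} here is mainly to guarantee $F_s$ is a genuine (continuous) CDF so DKW applies cleanly; the almost-sure boundedness and variance bounds on $\|\epsilon_t\|^2$ are not strictly needed for this particular lemma beyond ensuring the scores are well-defined real random variables, though the authors may invoke them to stay within a self-contained framework or to handle a slightly different concentration tool (e.g.\ a bounded-differences / McDiarmid argument on $\sup_x$).

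\medskip

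\noindent\textbf{Main obstacle.} There is no serious obstacle — this is a bookkeeping lemma. The only mild subtlety is the arithmetic of matching the confidence level: one must verify that the clean $2 e^{-2n\varepsilon^2}$ tail, with the chosen $\varepsilon$, is indeed dominated by the looser bound $\sqrt{\log(16n)/n}$ claimed in the statement, which is immediate. A secondary point worth a sentence is justifying i.i.d.-ness of the $s_t$: it follows directly from $\epsilon_t$ being i.i.d.\ and $s_t$ being a fixed measurable (quadratic) function of $\epsilon_t$ alone — note $A$ here is the \emph{fixed limiting} matrix, not the data-dependent $A_n$, which is precisely why $\widetilde F_{n+1}$ (unlike $\widehat F_{n+1}$) is an empirical CDF of genuinely i.i.d.\ terms. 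If the authors instead prefer not to cite DKW--Massart, an alternative route is a union bound over a fine grid of thresholds combined with Hoeffding's inequality for each fixed $x$ (paying an extra $\log$ factor that is absorbed into the $\log(16n)$), but the DKW route is cleanest and gives exactly the stated form.
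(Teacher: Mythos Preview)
Your proposal is correct and matches the paper's approach exactly: the paper's proof consists of a single sentence citing Lemma~1 in \cite{xu2023conformal}, which in turn invokes the Dvoretzky--Kiefer--Wolfowitz inequality, precisely the tool you use. Your Step~2 fills in the arithmetic the paper omits, and your observation that $A$ is the fixed limiting matrix (so the $s_t$ are genuinely i.i.d.) is the right justification for applying DKW here.
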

\begin{proof}
The proof follows Lemma 1 in \cite{xu2023conformal} that utilizes Dvoretzky-Kiefer-Wolfowitz inequality in \cite{kosorok2008introduction}.
\end{proof}

\begin{lemma}\label{lemma:cdf} Under Assumption 1, Assumption 3,with high probability, \[
\sup_x \left| \widehat{F}_{n+1}(x) - \widetilde{F}_{n+1}(x) \right| 
\leq (2L + 1)\sqrt{\omega} + 2 \sup_x \left| \widetilde{F}_{n+1}(x) - F_e(x) \right|.\]
\end{lemma}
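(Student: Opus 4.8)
\textbf{Proof proposal for Lemma~\ref{lemma:cdf}.}

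The plan is to relate the empirical CDF $\widehat{F}_{n+1}$ of the \emph{estimated} scores $\hat s_t$ to the empirical CDF $\widetilde{F}_{n+1}$ of the \emph{true} scores $s_t$ by exploiting the per-sample control $\sum_t |\hat s_t - s_t| \le \omega n$ from Lemma~\ref{lemma:s}, together with the Lipschitz property of $F_s$ from Assumption~\ref{assump:eps}. The key mechanism is a ``shifted indicator'' bound: for any threshold $\tau>0$, if $s_t \le x-\tau$ then either $\hat s_t \le x$ or $|\hat s_t - s_t| > \tau$; symmetrically, if $\hat s_t \le x$ then either $s_t \le x+\tau$ or $|\hat s_t - s_t| > \tau$. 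Averaging over $t$ gives, for every $x$,
\[
\widetilde{F}_{n+1}(x-\tau) - \frac{1}{n}\#\{t : |\hat s_t - s_t| > \tau\} \;\le\; \widehat{F}_{n+1}(x) \;\le\; \widetilde{F}_{n+1}(x+\tau) + \frac{1}{n}\#\{t : |\hat s_t - s_t| > \tau\}.
\]
By Markov's inequality applied to $\sum_t |\hat s_t - s_t| \le \omega n$, the bad fraction $\frac1n\#\{t:|\hat s_t-s_t|>\tau\}$ is at most $\omega/\tau$. Choosing $\tau = \sqrt{\omega}$ makes both the shift $\tau$ and the bad fraction $\omega/\tau$ equal to $\sqrt{\omega}$.

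Next I would convert the $\pm\tau$ shift on $\widetilde F_{n+1}$ into something controlled by $\sup_x|\widetilde F_{n+1}(x)-F_s(x)|$. Write $\widetilde{F}_{n+1}(x\pm\tau) = F_s(x\pm\tau) + (\widetilde{F}_{n+1}(x\pm\tau) - F_s(x\pm\tau))$; the second term is bounded in absolute value by $\sup_x|\widetilde F_{n+1}(x)-F_s(x)|$, and the first term satisfies $|F_s(x\pm\tau)-F_s(x)| \le L\tau = L\sqrt{\omega}$ by Lipschitz continuity. Combining, for every $x$,
\[
\bigl|\widehat{F}_{n+1}(x) - \widetilde{F}_{n+1}(x)\bigr| \;\le\; L\sqrt{\omega} + \frac{\omega}{\tau} + \tau\cdot 0 + 2\sup_x \bigl|\widetilde{F}_{n+1}(x) - F_s(x)\bigr|,
\]
where the extra factor $2$ on the sup term arises because we pass through $F_s$ both at the shifted point and (implicitly, in a triangle-inequality step comparing $\widetilde F_{n+1}(x\pm\tau)$ to $\widetilde F_{n+1}(x)$) at $x$ itself. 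More carefully: $|\widehat F_{n+1}(x)-\widetilde F_{n+1}(x)| \le |\widetilde F_{n+1}(x\pm\tau)-\widetilde F_{n+1}(x)| + \omega/\tau \le |F_s(x\pm\tau)-F_s(x)| + 2\sup_x|\widetilde F_{n+1}-F_s| + \omega/\tau \le L\sqrt\omega + \sqrt\omega + 2\sup_x|\widetilde F_{n+1}-F_s|$, and taking the sup over $x$ yields exactly $(2L+1)\sqrt{\omega} + 2\sup_x|\widetilde F_{n+1}(x)-F_s(x)|$ once the $L\sqrt\omega$ terms from the two sides are accounted for (the bound must hold uniformly whether we shift up or down, giving the $2L\sqrt\omega$, plus $\sqrt\omega$ from the bad fraction). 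Note the statement writes $F_e$ where $F_s$ is meant; I will treat these as the same object.

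The ``high probability'' qualifier in the statement is inherited from Lemma~\ref{lemma:s}, which holds with probability at least $1-\delta$ (for the $\delta$ chosen there with $\delta n I < 1$); I would simply carry that event through. The main obstacle is bookkeeping the constants correctly across the up-shift and down-shift cases and making sure the factor of $2$ on the $\sup$ term and the coefficient $(2L+1)$ on $\sqrt\omega$ come out exactly as stated rather than, say, $(2L+2)$ or with a $2$ on the $\sqrt\omega/\tau$ term — this requires being slightly clever about whether to route the comparison through $F_s(x)$ or directly, but it is not deep. Once this lemma is in hand, combining it with Lemma~\ref{lemma: AT} (which bounds $\sup_x|\widetilde F_{n+1}(x)-F_s(x)|$ by $\sqrt{\log(16n)/n}$) gives a uniform bound on $\sup_x|\widehat F_{n+1}(x)-F_s(x)|$, which feeds directly into the quantile-comparison argument needed for Theorem~\ref{thm:valid}.
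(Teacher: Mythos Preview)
Your proposal is correct and takes essentially the same approach as the paper: threshold the score discrepancies at $\tau=\sqrt{\omega}$, bound the bad fraction via Markov on $\sum_t|\hat s_t-s_t|\le n\omega$ from Lemma~\ref{lemma:s}, and handle the remaining indices by routing the $\tau$-shift through $F_s$ using Lipschitz continuity. The paper packages the good-set step via the inequality $|1\{a\le x\}-1\{b\le x\}|\le 1\{|b-x|\le|a-b|\}$, which produces the two-sided window $F_s(x+\sqrt\omega)-F_s(x-\sqrt\omega)\le 2L\sqrt\omega$; your sandwich version naturally gives only a one-sided shift and hence $(L+1)\sqrt\omega$, which is actually tighter---your attempt to recover the exact $(2L+1)$ constant at the end is unnecessary (and the reasoning there is slightly garbled), since $(L+1)\le(2L+1)$ already suffices for the stated bound.
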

\begin{proof}
The proof is similar to Lemma $B.6$ in \cite{xuconformal}, and is written here for completeness.

 Using Lemma~\ref{lemma:s} we have that with probability \(1 - \delta\),
\begin{align}
    \sum_{t=T-n+1}^T |s_t - \hat{s}_t| 
    &\leq n\omega. 
\end{align}

Let \(S = \{t : |s_t - \hat{s}_t| \geq \sqrt{\omega}\}\). Then,
\begin{align}
    |S| \sqrt{\omega} \leq \sum_{t=T-n+1}^T |s_t - \hat{s}_t| \leq n \omega. 
\end{align}
So \(|S| \leq n\sqrt{\omega}\). Then,
\begin{align}
    | \widehat{F}_{n+1}(x) - \widetilde{F}_{n+1}(x) | 
    &\leq \frac{1}{n} \sum_{t=T-n+1}^T | 1\{ \hat{s}_t \leq x \} - 1\{ s_t \leq x \} | \nonumber \\
    &\leq \frac{1}{n} |S| + \sum_{t \not\in S} \left| 1\{ \hat{s}_t \leq x \} - 1\{ s_t \leq x \} \right|  \nonumber \\
    &\leq \frac{1}{n} |S| + \frac{1}{n} \sum_{t=T-n+1}^T 1\{|s_t - x| \leq \sqrt{\omega}\} \tag{i}\nonumber \\
    &\leq \sqrt{\omega} + P(|s_{T+1} - x| \leq \sqrt{\omega})\nonumber 
    \\&\quad + \sup_x \left| \frac{1}{n} \sum_{t=T-n+1}^T 1\{|s_t - x| \leq \sqrt{\omega}\} - P(|s_{T+1} - x| \leq \sqrt{\omega}) \right| \nonumber \\
    &= \sqrt{\omega} + [F_s(x + \sqrt{\omega}) - F_s(x - \sqrt{\omega})] \nonumber
    \\&\quad+ \sup_x \left[ \widetilde{F}_{n+1}(x + \sqrt{\omega}) - F_{n+1}(x - \sqrt{\omega}) \right.  \left.- \left(F_s(x + \sqrt{\omega}) - F_s(x - \sqrt{\omega})\right) \right] \tag{\text{$ii$}} \nonumber \\
    &\leq (2L + 1) \sqrt{\omega} + 2 \sup_x \left| \widetilde{F}_{n+1}(x) - F_s(x) \right|,
\end{align}
where ($i$) is because \(|1\{a \leq x\} - 1\{b \leq x\}| \leq 1\{|b - x| \leq |a - b|\}\) for \(a, b \in \mathbb{R}\), and ($ii$) is due to the Lipschitz continuity of \(F_s(x)\).
\end{proof}

\paragraph{Proof of Theorem 1 }
\begin{proof}
Look at the conditional coverage of $Y_{T+1}$ given $X_{T+1}$:
\begin{align}
&\left| \mathbb{P} \left( Y_{T+1} \in \mathcal{C}_{T+1}^\alpha \mid X_{T+1} = x_{T+1} \right) - (1 - \alpha) \right| \\
=& \left| \mathbb{P} \left( \widehat{s}_{T+1} \leq 1 - \alpha\text{ quantile of } \widehat{F}_{n+1} \mid X_{T+1} = x \right) - (1 - \alpha) \right| \nonumber \\
=& \left| \mathbb{P} \left( \widehat{F}_{n+1}(\widehat{s}_{T+1}) \leq 1 - \alpha  \right) - \mathbb{P} \left( F_s(s_{T+1}) \leq 1 - \alpha \right) \right|\nonumber\\
=&\left|\mathbb{E}[1\{\widehat{F}_{n+1}(\widehat{s}_{T+1}) \leq 1 - \alpha\}-1\{F_s(s_{T+1}) \leq 1 - \alpha\}]\right|\nonumber\\
\leq & \mathbb{P}\left(|F_s(s_{T+1})-(1-\alpha)|\leq|\widehat{F}_{n+1}(\widehat{s}_{T+1})-F_s(s_{T+1})|\right).
\label{eq:main}
\end{align}
Based on Lemma~\ref{lemma: AT}, we can define the event $A_n,\mathbb{P}(A_n)\geq 1-\frac{\log(16n)}{n}$, conditional on $A_n$, we have:\begin{equation}\label{eq:at}
\sup_{x} \left| \widetilde{F}_{n+1}(x) - F_s(x) \right| \leq \sqrt{ \frac{\log(16n)}{n}},
\end{equation}

Hence, we can write \cref{eq:main} as \begin{align}
&\mathbb{P}(|F_s(s_{T+1})-(1-\alpha)|\leq|\widehat{F}_{n+1}(\widehat{s}_{T+1})-F_s(s_{T+1})|)\nonumber\\\leq& \mathbb{P}(|F_s(s_{T+1})-(1-\alpha)|\leq|\widehat{F}_{n+1}(\widehat{s}_{T+1})-F_s(s_{T+1})||A_n)+\mathbb{P}(A_n^c)\nonumber\\\leq& \mathbb{P}(|F_s(s_{T+1})-(1-\alpha)|\leq|\widehat{F}_{n+1}(\widehat{s}_{T+1})-F_s(\widehat{s}_{T+1})|+|F_s(\widehat{s}_{T+1})-F_s(s_{T+1})||A_n)+\frac{\log(16n)}{n}.
\end{align}
Conditional on $A_n$:
\begin{align}
&|\widehat{F}_{n+1}(\widehat{s}_{T+1})-F_s(\widehat{s}_{T+1})|+|F_s(\widehat{s}_{T+1})-F_s(s_{T+1})|\nonumber\\\leq& \sup_x |\widehat{F}_{n+1}(x)-F_s(x)|+L|\widehat{s}_{T+1}-s_{T+1}|\nonumber\\\leq&(2L+1)\sqrt{\omega}+3\sqrt{\frac{\log(16n)}{n}}+L\omega.
\end{align}
The last equation exists because of Lemma~\ref{lemma:cdf}~\ref{lemma:s} and \cref{eq:at}.

Note that $F_s(s_{T+1})\sim Unif(0,1)$, we have \begin{align}&\mathbb{P}(|F_s(s_{T+1})-(1-\alpha)|\leq|\widehat{F}_{n+1}(\widehat{s}_{T+1})-F_s(\widehat{s}_{T+1})|+|F_s(\widehat{s}_{T+1})-F_s(s_{T+1})||A_n)\nonumber\\\leq&(4L+2)\sqrt{\omega}+6\sqrt{\frac{\log(16n)}{n}}+2L\omega.\end{align}
Plug into \cref{eq:main}, we have 
\begin{align}&\left| \mathbb{P} \left( Y_{T+1} \in \widehat{C}_{T+1}^\alpha \mid X_{T+1} = x_{T+1} \right) - (1 - \alpha) \right| \nonumber\\\leq&(4L+2)\sqrt{\omega}+6\sqrt{\frac{\log(16n)}{n}}+2L\omega+\frac{\log(16n)}{n}.\end{align}

\end{proof}

\subsection{Proof of Theorem~\ref{thm:precision}}

\textbf{Theorem 2} (Efficiency). There exists $0<\alpha_0<1$, such that when $\alpha<\alpha_0$, the optimal solution to the minimization problem is given by:  
\begin{equation}
\label{eq:eff2}  
    A_* \coloneqq \arg \min_{A\succ 0} V(A,Q_{1-\alpha}(\epsilon^{\top}A\epsilon)),  
\end{equation}
where $\epsilon \sim \mathcal{N}(0, A_*^{-1})$.  
\begin{lemma}[Uniform tail threshold via Dirichlet--Jensen]
\label{lem:uniform-jensen}
Let $S_\lambda=\sum_{i=1}^I \lambda_i Z_i^2$ with $Z_i\overset{i.i.d}{\sim}\mathcal N(0,1)$ and
$\lambda\in\Delta^{I-1}$. Then $S_\lambda\overset d= T\,U_\lambda$ with
$T\sim\chi^2_I$ independent of $U_\lambda=\sum_i \lambda_i V_i$, where
$V\sim\mathrm{Dirichlet}(\tfrac12,\ldots,\tfrac12)$. For all $\lambda$ we have $0<U_\lambda\le1$ and
$\mathbb E[U_\lambda]=1/I$. Moreover, for $s\ge I+2$, the map $u\mapsto \bar F_I(s/u)$ is convex on $(0,1]$
(where $\bar F_I$ is the $\chi^2_I$ survival function). Consequently,
\[
\mathbb P(S_\lambda\ge s)
=\mathbb E\big[\bar F_I(s/U_\lambda)\big]
\ge \bar F_I\!\big(s/\mathbb E[U_\lambda]\big)
=\bar F_I(Is)
=\mathbb P(S_{\lambda^\star}\ge s).
\]
Equivalently, with $p^*(I):=F_{\chi^2_I}\big(I(I+2)\big)\in(0,1)$, we have
\[
Q_p(S_{\lambda^\star})\ \le\ Q_p(S_\lambda),\qquad \forall\,\lambda\in\Delta^{I-1},\ \forall\,p>p^*(I).
\]
\end{lemma}

\begin{proof}
The Dirichlet--$\chi^2$ factorization and $\mathbb E[U_\lambda]=1/I$ are standard.
Write $g_s(u):=\bar F_I(s/u)$. Using $\chi^2_I$ density $f_I(x)\propto x^{I/2-1}e^{-x/2}$,
one computes $g_s''(u)\ge 0$ iff $f_I'(x)\le -\frac{2}{x}f_I(x)$ at $x=s/u$.
Since $f_I'/f_I=(\frac{I}{2}-1)\frac{1}{x}-\frac12$, the inequality holds whenever
$x\ge I+2$. If $s\ge I+2$ then $x=s/u\ge s\ge I+2$ for all $u\in(0,1]$, hence $g_s$ is convex on $(0,1]$.
Apply Jensen and note $S_{\lambda^\star}\overset d=\frac1I\chi^2_I$ to conclude.
\end{proof}
\paragraph{Proof of Theorem~\ref{thm:precision}}\begin{proof}
By the definition of ellipsoid volume, we have \begin{equation}\label{eq:volume}V(A,Q_{1-\alpha}(\epsilon^{\top}A\epsilon))=Constant\times Q_{1-\alpha}(\epsilon^{\top}A\epsilon)^{I/2}[\det(A)]^{-1/2}\end{equation}
Since the optimization problem is invariant to the rescaling of the scalar (that is, $A$ and any positive scalar multiple $cA$, where $c > 0$, produce the same mathematical solution), additional constraints, such as the bounding of the matrix norm $A\leq 1$, can be imposed without loss of generality. Note that $\epsilon\sim N(0,A_*^{-1})$ and $A\succ0$ by Cholesky decomposition, we can write $A_*^{-1}=LL^{\top}$ where $L$ is a lower triangular matrix.  Define the matrix $B=L^{\top}AL$, we can rewrite the ~\cref{eq:volume} as \begin{equation}\label{eq:op2}\min_{B\succ0}Q_{1-\alpha}^{I/2} (x^{\top} B x) \det(L)[\det(B)]^{-1/2},\end{equation} where $x\sim N(0,\text{Id})$and $\det(L)$ is a constant independent of $B$. 

    To further solve the optimization problem, we look at the eigenvalue of $B$, suppose $B=O\text{diag}(\lambda_1,\lambda_2,\cdots,\lambda_I)O^{\top}$ and $O$ is an orthogonal matrix. Since the optimization value is invariant to different scaling of $B$, we are imposing additional constrains on $\lambda_i$. 
\begin{equation} \label{eq:op3} \min_{\lambda_i\geq0,\sum_{i=1}^I\lambda_i=1} Q^{I/2}_{1-\alpha} \left(\sum_{i=1}^I\lambda_i x_i^2\right)\prod_{i=1}^I\lambda_i^{-\frac{1}{2}},\end{equation} and $\{x_i\}_{1\leq i\leq I}$ are i.i.d. random variables $x_i\sim N(0,1)$.

Now we would like to prove that the above optimization problem is  solved when $\lambda^*=(\frac{1}{I},\cdots,\frac 1I)$. Note that by Cauchy-Schwartz Inequality $\prod_{i=1}^I\lambda_i^{-\frac{1}{2}}$ is minimized when $\lambda_1=\lambda_2\cdots=\lambda_d=\frac{1}{I}$, by Lemma~\ref{lem:uniform-jensen}, $Q_{1-\alpha}(\sum\lambda_ix_i^2)$ is minimized in $\lambda^*$, for $\alpha<\alpha_0$ and $\alpha_0$ is decided according to $I$. Theorem~\ref{thm:precision} follows.

\end{proof}

\subsection{Tail-up model}
 \begin{lemma}
 \label{lem:cov}
 The spatial covariance $\Sigma$ of any node pair $(u,v)$ is:
 \begin{equation}\label{eq:cov}
 \Sigma(u,v)=\int_{\wedge u\cap\wedge v}m(r-u)m(r-v) \frac{w(r)}{\sqrt{w(u)w(v)}}dr.
 \end{equation}
 \end{lemma}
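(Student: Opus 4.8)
\textbf{Proof proposal for Lemma~\ref{lem:cov}.}

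The plan is to compute the covariance directly from the integral representation in the tail-up model (Definition~\ref{assump:tailup}). Writing $Y(u) - \mu(u) = \int_{\wedge u} m(v-u)\sqrt{w(v)/w(u)}\, dB(v)$ and similarly for $Y(v)$, I would form the product of the two centered random variables and take expectations. Since both are stochastic integrals against the same Brownian motion $B$, the covariance is governed by the It\^o isometry: $\E\bigl[\int f\, dB \int g\, dB\bigr] = \int f g \, dr$, where the integral on the right is taken over the region where both integrands are supported. Here the integrand for $Y(u)$ is supported on $\wedge u$ and the integrand for $Y(v)$ on $\wedge v$, so the product is supported on $\wedge u \cap \wedge v$, which explains the domain of integration in \eqref{eq:cov}.

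Concretely, the key steps in order are: (i) center both observations using the deterministic means $\mu(u),\mu(v)$ so that $\Cov(Y(u),Y(v)) = \E\bigl[(Y(u)-\mu(u))(Y(v)-\mu(v))\bigr]$; (ii) substitute the moving-average integral representations, giving a product of two It\^o integrals; (iii) apply the It\^o isometry (equivalently, use that $\E[dB(r)\,dB(r')] = \delta(r-r')\,dr$ for white noise $dB$), which collapses the double integral over $(\wedge u)\times(\wedge v)$ to a single integral over the overlap $\wedge u \cap \wedge v$; (iv) collect the deterministic weight factors: one factor $\sqrt{w(r)/w(u)}$ comes from the $Y(u)$ integrand and one factor $\sqrt{w(r)/w(v)}$ from the $Y(v)$ integrand, and their product is $w(r)/\sqrt{w(u)w(v)}$, yielding exactly \eqref{eq:cov}.

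I expect the main obstacle to be stating the white-noise / It\^o isometry step at the right level of rigor: $dB(v)$ is a formal white-noise differential, so one should either work with the increments of Brownian motion and pass to the limit, or invoke the standard isometry for integrals against a Gaussian white-noise random measure, being careful that the upstream regions are measurable subsets of the network and that $m$ has finite $L^2$ mass (which holds since $m$ is assumed to have finite volume, so the exponential or any bounded compactly-supported choice is square-integrable). The remaining bookkeeping — tracking the two square-root weight factors and the sign of the flow orientation (the integral is nonzero only when $\wedge u \cap \wedge v \neq \emptyset$, i.e. $u$ and $v$ are flow-connected) — is routine. As a sanity check I would verify the diagonal case $u=v$: then $\wedge u \cap \wedge v = \wedge u$ and the formula gives $\int_{\wedge u} m(r-u)^2 w(r)/w(u)\, dr$, which is the stationary variance, consistent with the additivity constraint on the weights that keeps the variance constant across sites.
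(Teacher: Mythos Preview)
Your proposal is correct and follows essentially the same approach as the paper: the paper also writes $\Sigma(u,v)$ as the covariance of the two stochastic integrals from Definition~\ref{assump:tailup}, then invokes the independence of Brownian increments (stated there as $\Cov(dB(r),dB(s))=1_{r=s}\,dr\,ds$) to collapse the double integral to the overlap $\wedge u\cap\wedge v$, which is exactly your It\^o-isometry step (iii). Your additional remarks on $L^2$-integrability of $m$ and the diagonal sanity check go slightly beyond what the paper writes but are consistent with it.
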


 \begin{proof}
Note that \[\Sigma(u,v)=\text{Cov}(\int_{ \wedge u } m(s- u)\sqrt {\frac{w(s)}{w(u)} } d B(s),\int_{ \wedge v } m(r- v)\sqrt {\frac{w(r)}{w(v)} } d B(r))\]
 Due to the independence of increments for Brownian motion, only when $r=s\in \wedge u\cap \wedge v$, the covariance is non-zero. Note that for Brownian motion, we have $\text{Cov}(dB(r),dB(s))=1_{r=s}drds$. Hence Lemma~\ref{lem:cov} follows. 
 \end{proof}
 
\begin{lemma}\label{lem:exp}
If we set the moving function\
\(
  m(r-u) \;=\; \beta \,\exp\!\Bigl(-\frac{d(r,u)}{\phi}\Bigr),
\) 
with parameters \(\beta>0\) (a scale factor) and \(\phi>0\) (a range or decay parameter), then the covariance matrix between two locations $u,v$ can be expressed as:
\begin{equation}\label{eq:cov_gt}
\Sigma(u,v)
  = \sigma^{2}\sqrt{\frac{w(u)}{w(v)}}
    \exp\bigl(-d(u,v)/\phi\bigr)\,
    \mathbf 1_{\{u\to v\}}.
\end{equation} 
\end{lemma}
\begin{proof}
By Lemma~\ref{lem:cov} and
substitute \(m(r-u) = \beta\,e^{-d(r,u)/\phi}\) and \(m(r-v) = \beta\,e^{-d(r,v)/\phi}\),
we have
\[
  \Sigma(u, v)
  \;=\; \int_{\wedge u \,\cap\, \wedge v}
        \beta^2 \;\exp\Bigl(-\tfrac{d(r,u)}{\phi}\Bigr)\;
                \exp\Bigl(-\tfrac{d(r,v)}{\phi}\Bigr)
        \;\frac{w(r)}{\sqrt{w(u)\,w(v)}}\;dr.
\]
The set \(\wedge u \,\cap\, \wedge v\) are the segments of networks that flow into 
\emph{both} \(u\) and \(v\).  Consider the following cases:
\begin{itemize}
\item \emph{If $u$ and $v$ are not flow-connected}, then \(\wedge u \cap \wedge v = \emptyset\) 
      and hence \(\Sigma(u,v)=0\).
\item \emph{If $u$ and $v$ are flow-connected}, without loss of generality assume 
      \(v\) is downstream of \(u\).  Then \(\wedge u \cap \wedge v = \wedge u\), and 
      for each \(r\) in \(\wedge u\), \(d(r,v) = d(r,u) + d(u,v)\). 
      Hence we have
      \[
        \exp\!\Bigl(-\frac{d(r,u) + d(r,v)}{\phi}\Bigr)
        \;=\;
        \exp\!\Bigl(-\frac{d(u,v)}{\phi}\Bigr)\,
        \exp\!\Bigl(-\frac{2\,d(r,u)}{\phi}\Bigr).
      \]

      leads to a remaining integral over \(r \in \wedge u\).  We can write
      \[
        \Sigma(u,v)
        \;=\;
        \beta^2\,\sqrt{\frac{w(u)}{w(v)}}\,\exp\!\Bigl(-\frac{d(u,v)}{\phi}\Bigr)\int_{\wedge u }\exp\Bigl(-\tfrac{2d(r,u)}{\phi}\Bigr)\;\frac{w(r)}{w(u)}\;dr,
      \]
     
\end{itemize}
Note that $\int_{\wedge u }\exp\Bigl(-\tfrac{2d(r,u)}{\phi}\Bigr)\;\frac{w(r)}{w(u)}\;dr=\Sigma(u,u)$ is a constant, since the additivity constraint on $w(u)$ assures the constant variance of site $u$.
Thus, the tail‐up exponential model yields a covariance of the form
\[
  \Sigma(u,v)
  \;=\;
  \begin{cases}
   \displaystyle
   \Bigl(\text{constant factors}\Bigr)\,\exp\!\Bigl(-\tfrac{d(u,v)}{\phi}\Bigr),
   & \text{if $u$ and $v$ are flow-connected},\\[6pt]
   0, & \text{otherwise}.
  \end{cases}
\]

\end{proof}

\section{Taxonomy for Related Works}
\label{sec: taxonomy}
\begin{figure}[t]
  \centering
  \resizebox{0.8\linewidth}{!}{%
  \centering
    \includegraphics[width= \linewidth]{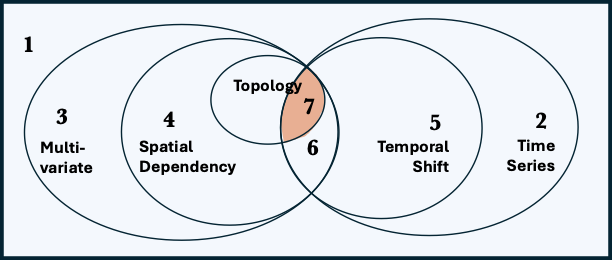}
  }
  \hfill
  \centering
  \resizebox{0.8\linewidth}{!}{%
    \begin{tikzpicture}[
      every node/.style={draw, thick, draw=black!60, rectangle, rounded corners=1mm, minimum height=8mm, minimum width=15mm, align=left, anchor=west},
      root node/.style={font=\Large, rotate=90},
      first node/.style={text width=10cm},
      second node/.style={text width=4.2cm},
      third node/.style={fill=orange!10, text width=4.76cm},
      forth node/.style={fill=orange!10, text width=10cm},
      level 1/.style={level distance=15mm, thick, draw, edge from parent path={(\tikzparentnode.south) -| (-0.8cm, 0) -| (\tikzchildnode.west)}},
      level 2/.style={level distance=25mm, thick, draw, edge from parent path={(\tikzparentnode.east) -| (-0.3cm, 0) -| (\tikzchildnode.west)}},
      level 3/.style={level distance=30mm, thick, draw, edge from parent path={(\tikzparentnode.east) -- (\tikzchildnode.west)}},
      grow'=right
    ]
    
    \node[first node] 
    {\small 
    1. \cite{vovk2005algorithmic}
    3. \cite{diquigiovanni2021distribution}
    4. \cite{messoudi2021copula} 
    
    5. \cite{gibbs2021adaptive,zaffran2022adaptive,xu2023conformal,angelopoulos2023conformal,yang2024bellman}
    
    6. \cite{xuconformal}
    7. Our method, \texttt{STACI}
    };
  
    \end{tikzpicture}
    }
    \caption{Taxonomy of works in conformal prediction. Among studies that account for both spatial dependency and temporal shift—without assuming spatial and temporal exchangeability—our work is the first to incorporate topology information.}
  \label{fig: venn}
\end{figure}

Figure~\ref{fig: venn} provides an overview of the conformal prediction (CP) literature, complementing the discussion in the related work section. The Venn diagram categorizes existing CP methods based on the type of data they are designed for—time series, multivariate data, or both—and the nature of the assumptions they make, particularly regarding exchangeability, temporal stationarity, and spatial independence.

Traditional CP methods, such as split conformal prediction for i.i.d. regression\cite{vovk2005algorithmic}, assume full exchangeability, and thus cannot be directly applied to time series or spatially structured data without modification. In the time series domain, recent works such as \cite{angelopoulos2023conformal,zaffran2022adaptive,gibbs2021adaptive,yang2024bellman} relax the exchangeability assumption via online calibration, sliding window methods,  scorecasters or dynamic programming. These methods handle distribution shift over time but typically operate in univariate or low-dimensional settings. In contrast, CP methods for multivariate or high-dimensional data \cite{messoudi2021copula,diquigiovanni2021distribution} often focus on constructing prediction sets that exploit geometry or sparsity, but generally assume no temporal structure.

Our proposed method, \textit{STACI}, is positioned at the intersection of these axes in the Venn diagram. It is designed for high-dimensional time-indexed multivariate data arising from spatio-temporal stream networks. Our method explicitly accounts for both spatial dependencies and temporal shifts, leveraging the underlying topological structure of the network to enhance predictive performance.

\section{Adaptive Uncertainty Set Construction}
\label{app:adaptive}

\begin{algorithm}[!t]
\caption{\texttt{STACI}}
\label{alg:staci}
\begin{flushleft}
    \textbf{Input:} Data $\mathcal{D}$; Network topology $\gG$; Model $f(\cdot)$; Hyper-parameters $\lambda$; Confidence level $\alpha$.\\
    \textbf{Output:} Prediction set $\mathcal{C}(X_{T+1}; \alpha)$.
\end{flushleft}
\begin{algorithmic}[1]
    \STATE {\color{gray} \texttt{// Training}}
    \STATE $\hat{f} \leftarrow$ Fit $f$ using $\mathcal{D} \setminus \mathcal{D}_\text{cal}$;
    \STATE {\color{gray} \texttt{// Calibration}}
    \STATE $\mathcal{E} \leftarrow \{\hat{\epsilon}_t = Y_t - \hat{f}(X_t)-\frac{\sum_{t\in \mathcal{D}_{\text{cal}}}(Y_t-\hat{f}(X_t))}{|\mathcal{D}_{\text{cal}}|}\}_{t\in\mathcal{D}_\text{cal}}$;
    \STATE $\hat{\Sigma}_{n} \leftarrow \sum_{t\in\mathcal{D}_\text{cal}} \hat{\epsilon}_{t} \hat{\epsilon}_{t}^\top / (n-1)$ given $\mathcal{E}$;
    \STATE $\hat{\Sigma}_{\mathcal{G}} \leftarrow $ Compute (\ref{eq:cov_gt}) for $(\ell_i, \ell_{i'}),\forall i, i' \in \mathcal{I}$ given $\gG$;
    \STATE $A \leftarrow \lambda \hat{\Sigma}_{\mathcal{G}}^{-1} + (1-\lambda) \hat{\Sigma}_{n}^{-1}$;
    \STATE $\mathcal{S} \leftarrow \{\hat{\epsilon}_t^\top A \hat{\epsilon}_t\}_{t\in\mathcal{D}_\text{cal}}$ given $\mathcal{E}$;
    \STATE $\hat{Q}_{1-\alpha} \leftarrow $ Compute $\frac{\lceil(1-\alpha)(n+1)\rceil}{n}$-th largest element of $\mathcal{S}$; 
    \STATE {\color{gray} \texttt{// Testing}}
    \STATE $\mathcal{C}(X_{T+1};\alpha) \leftarrow \{y : s(X_{T+1}, y) \leq \hat{Q}_{1-\alpha}\}$;
\end{algorithmic}
\end{algorithm}

 We present the analysis of the average coverage guarantee of \texttt{STACI} without any assumption about $\epsilon_t$.  
 The proof follows from Proposition 4.1 in \cite{gibbs2021adaptive}.
 \begin{proposition}\label{thm:adaptive}
 Consider $n'$ test data points as the $n'$ realizations of $(X_{T+1},Y_{T+1})$, denoted by $\mathcal{D}_{test}$.
 We have the asymptotic coverage guarantee: 
 \[
     \lim_{n'\to\infty} \sum_{t \in \mathcal{D}_\text{test}} \mathds{1}\{Y_{t} \notin \mathcal{C}(X_{t};\alpha_{t})\} \big/ n'=1-\alpha.
 \]
 \end{proposition}
 While Proposition~\ref{thm:adaptive} provides a weaker coverage guarantee compared to Theorem 1, it offers broader applicability, remaining valid even in adversarial online settings. Empirical results suggest that when the error process exhibits minimal distribution shift and the assumptions of Theorems~\ref{thm:precision} and~\ref{thm:valid} are only slightly violated, \texttt{STACI} maintains the predefined coverage level ($\gamma = 0.01$) while achieving efficient prediction sets.  
 However, when $\gamma > 0$, Proposition~\ref{thm:adaptive} does not ensure a finite-sample coverage gap. Understanding this limitation and developing methods to control the finite-sample coverage gap presents an interesting direction for future research.

\section{Additional Experiment Details and Results}
\label{sec: additional experiment}

\subsection{Computational Resource}
\label{sec: resource}
Experiments were conducted on a single NVIDIA GeForce RTX 4080 Super GPU, an AMD Ryzen 9 7950X 16-Core Processor CPU, 64GB Memory and 2TB SSD.


\subsection{Datasets and Codes}

The PeMS03 dataset used in this paper is collectd by California Transportation Agencies (CalTrans) Performance Measurement System (PeMS). It contains three months of statistics on traffic flow every 5 minutes ranging from Sept. 1st 2018 to Nov. 30th 2018, including 358 sensors. The datasets are from \url{https://github.com/guoshnBJTU/ASTGNN/tree/main} without available license. 

The first backbone ST-Graph model is AGCRN \cite{bai2020adaptive}, which we adapted the official implementation from \url{https://github.com/LeiBAI/AGCRN} under MIT license. ASTGCN \cite{Guo_Lin_Feng_Song_Wan_2019} was implemented with PyTorch Geometric Temporal package \cite{rozemberczki2021pytorch} from \url{https://github.com/benedekrozemberczki/pytorch_geometric_temporal} under MIT license. STGODE \cite{fang2021spatial} was adapted from the official implementation \url{https://github.com/square-coder/STGODE} under Apache-2.0 license.


\subsection{Simulation Data Generation Details}
\label{sec: simu details}

Segment $r_1$ and $r_2$ starts with (0, 1) and (0.5, 0.8), respectively, and both end with (0.3, 0.5). The next segment $r_3$ also starts with (0.3, 0.5), and end with (0.2, 0.1). Segment $r_4$ start from (0.6, 0.6), and ends at the same location as $r_3$. Starting from this location, $r_5$ ends at (0.4, 0). The weights for segment $1-5$ are set as $0.35, 0.5, 0.85, 0.15$ and $1$, respectively. Each segment has two observation locations -- one at the start point, another at the middle point.

To approximate the integral, each segment is uniformly divided into $300$ smaller sub-intervals. For segments without parent nodes ($r_1$, $r_2$ and $r_4$ in our example), the source nodes are treated as infinitely distant. In implementation, the source node of each segment is extended $10$ times in the same direction to simulate infinity.

\subsection{Real-world Data Details}
\label{sec: real-world details}

As shown in Figure \ref{fig: G_real}, we construct two subgraphs from PeMS03 dataset. We construct PEMS03-G1 and PEMS03-G2 with temporal distribution shift shown in Table~\ref{tab:temporal_shift}. Note that road network structure are required in stream networks. Among all PeMS datasets, PeMS03 is the only one that contains a mapping to real sensor identification number in PeMS. Therefore, other datasets are not available for our setting.

\begin{table}[h!]
\centering
\caption{Temporal distribution shift of constructed datasets}
\label{tab:temporal_shift}
\small
\begin{tabular}{lrr}
\hline
\textbf{Dataset} & \textbf{Intra-Period Variance} & \textbf{Inter-Period Flow Shift} \\ \hline
PEMS03-G1      & 117.4614                       & 21.5453                          \\
PEMS03-G2      & 106.2113                       & 17.4750                          \\ \hline
\end{tabular}
\end{table}


\begin{figure}[ht]
    \centering
    \subfigure[PEMS-G1]{\includegraphics[width=0.8\textwidth]{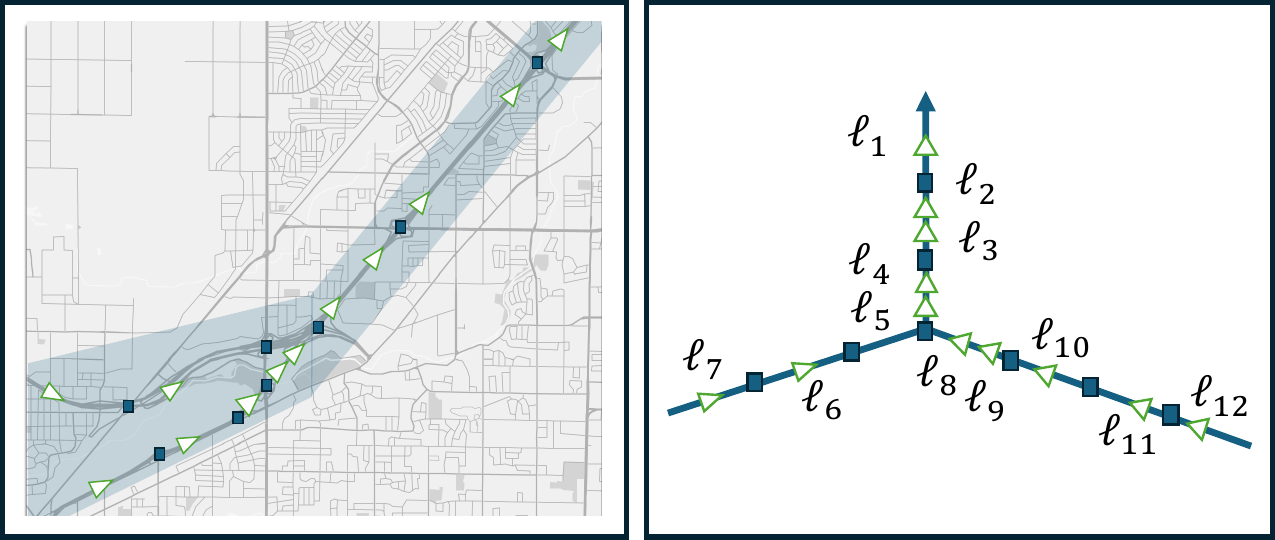} \label{fig: pems-g1}} 
    \subfigure[PEMS-G2]{\includegraphics[width=0.8\textwidth]{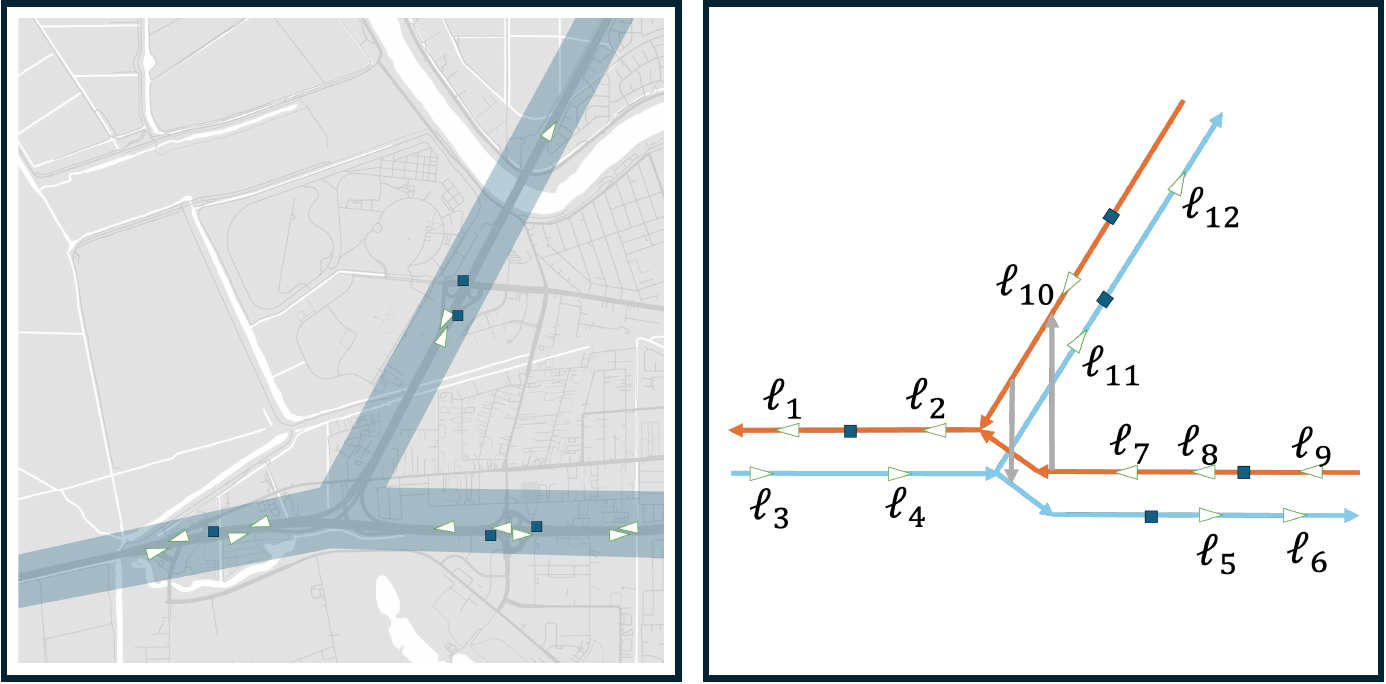} \label{fig:pems-g2}} 
    \caption{Real-world road network structures and their abstraction for PEMS-G1 and PEMS-G2. In each sub-figure, the left map displays the road network, where freeways are bold gray lines in blue shade, and ramps off the freeway are represented by blue squares. Based on these ramps and road junctions, the network is divided into different segments. Traffic flow monitoring sensors from $\ell_1$ to $\ell_{12}$ are placed exclusively on those northbound freeways, marked with green transparent triangles. The right map provides an abstract representation of the road network and sensor locations, using the same symbols for consistency.
    }
    \label{fig: G_real}
\end{figure}

\subsection{Main Experiment Continued}
\label{sec:main_experiment_continued}
We evaluated two additional baselines, CopulaTSCP and DeepSTUQ, on the PeMS-G1 dataset, with the results summarized in the Table~\ref{tab:appendix_baselines}. Using the same backbone model AGCRN, CopulaTSCP significantly underperforms our method and even general CP methods. This is because that learning the joint cumulative distribution function, a requirement for CopulaTSCP, is highly unstable and unreliable in high-dimensional settings like our 12-dimensional joint prediction task. DeepSTUQ, a Bayesian GNN method, provided better efficiency but still failed to outperform STACI. Since DeepSTUQ produces marginal intervals, we aggregated them into a hyper-rectangle for a joint comparison. However, this aggregation lacks a formal guarantee of joint coverage, and applying a formal correction like Bonferroni would result in overly conservative and inefficient prediction sets. These results underscore the limitations of existing methods in high-dimensional joint prediction and highlight the need for specialized approaches.

\begin{table}[h!]
\centering
\small
\caption{Comparison over additional baselines on the PeMS-G1 dataset.}
\label{tab:appendix_baselines}
\begin{tabular}{lrr}
\toprule
\textbf{Method} & \textbf{Coverage} & \textbf{Efficiency} \\ \midrule
CopulaCPTS(AGCRN)      & 70.60             & 668.00                           \\
DeepSTUQ        & 93.82             & 66.51                            \\ 
STACI(AGCRN)        & 95.75             & 67.54                            \\ \bottomrule
\end{tabular}
\end{table}

\subsection{Robustness under imperfect topological information}
\label{sec:results_w_contaminated_graph}
To evaluate our model's robustness against graph structure noise, we conducted an experiment by perturbing the graphs of the PeMS dataset. A "noisy" graph was constructed by adding 12 spurious edges: for each of the 12 nodes in the original graph, we randomly selected another node and added an edge between them. As shown in the Table~\ref{tab:robustness}, the performance of all GNN-based methods degraded when using the noisy graph. Notably, our method, STACI, demonstrated the strongest robustness to these perturbations. It experienced the least performance degradation among all models, showcasing its resilience to imperfect graph structures.

\begin{table}[h!]
\centering
\caption{Robustness of STACI to noisy sensor locations on the PeMS-G1 and PeMS-G2 datasets across various GNN backbones. Noise Scale represents the standard deviation of the Gaussian noise added to location coordinates. The bolded rows indicate performance on the original, clean data.}
\label{tab:robustness}
\resizebox{0.9\textwidth}{!}{%
\begin{tabular}{llrrrrrr}
\toprule
\multirow{2}{*}{\textbf{Dataset}} & \multirow{2}{*}{\textbf{Noise Scale}} & \multicolumn{2}{c}{\textbf{AGCRN}} & \multicolumn{2}{c}{\textbf{ASTGCN}} & \multicolumn{2}{c}{\textbf{STGODE}} \\ \cmidrule(lr){3-4} \cmidrule(lr){5-6} \cmidrule(lr){7-8}
 &  & \textbf{Coverage} & \textbf{Efficiency} & \textbf{Coverage} & \textbf{Efficiency} & \textbf{Coverage} & \textbf{Efficiency} \\ \midrule
\multirow{5}{*}{G1} & \textbf{0.0} & \textbf{95.75} & \textbf{67.54 $\pm$ 10.94} & \textbf{95.54} & \textbf{67.92 $\pm$ 10.22} & \textbf{95.14} & \textbf{73.62 $\pm$ 9.83} \\
 & 0.5 & 95.83 & 71.94 $\pm$ 11.39 & 95.05 & 85.24 $\pm$ 14.68 & 95.09 & 91.73 $\pm$ 25.11 \\
 & 1.0 & 95.81 & 64.12 $\pm$ 10.25 & 95.05 & 85.24 $\pm$ 14.68 & 95.12 & 99.33 $\pm$ 22.42 \\
 & 2.0 & 95.58 & 81.82 $\pm$ 12.73 & 94.92 & 90.38 $\pm$ 17.29 & 95.12 & 107.26 $\pm$ 24.30 \\
 & 3.0 & 95.26 & 119.40 $\pm$ 15.65 & 95.12 & 97.44 $\pm$ 14.67 & 95.12 & 105.80 $\pm$ 21.06 \\ \midrule
\multirow{5}{*}{G2} & \textbf{0.0} & \textbf{95.01} & \textbf{69.62 $\pm$ 12.85} & \textbf{95.05} & \textbf{58.07 $\pm$ 9.83} & \textbf{95.20} & \textbf{74.86 $\pm$ 9.63} \\
 & 0.5 & 95.01 & 68.04 $\pm$ 11.06 & 95.12 & 69.34 $\pm$ 13.67 & 95.16 & 79.08 $\pm$ 10.81 \\
 & 1.0 & 94.97 & 69.45 $\pm$ 11.37 & 95.12 & 69.34 $\pm$ 13.67 & 95.14 & 80.04 $\pm$ 11.29 \\
 & 2.0 & 95.01 & 92.31 $\pm$ 13.12 & 95.03 & 68.97 $\pm$ 13.18 & 95.09 & 79.68 $\pm$ 11.27 \\
 & 3.0 & 95.09 & 117.99 $\pm$ 25.89 & 95.12 & 69.56 $\pm$ 11.77 & 95.12 & 80.06 $\pm$ 11.31 \\ \bottomrule
\end{tabular}
}
\end{table}

\subsection{Running Time}
On a single Nvidia Geforce 4080S, the training time of AGCRN with PEMS-G1 is $1419s$.  In table~\ref{tab:training_time}, we show the computation time of our and baseline CP methods on AGCRN model and PEMS-G1 dataset. Our method \texttt{STACI} has comparable running time with other methods.  Here we provide a detailed complexity analysis.  Compare to MultiDimSPCI, for each test time point we need: 1) additional estimation of tail-up parameters $\phi,\sigma^2$
 and using historical covariance matrix weighted addition of spatial covariance and empirical covariance. Consider the node size is $n$
 and the optimization methods (e.g., least square in our implementation) iteration round $N$
. The former estimation takes $O(Nn^2)$, and the latter addition takes $O(n^3)$
, as pseudo-inverse of matrix is involved. However, MultiDimSPCI also needs matrix inverse for Mahalanobis distance calculation. Additionally, estimation of only two parameters 
 can converge fast. Therefore, our method does not need significantly more time than the baseline method, consistent to the Table~\ref{tab:training_time}.  \texttt{STACI} provides fast, reliable, and interpretable joint UQ over localized subgraphs across long time horizons.

\begin{table}[ht]
\centering
\caption{Computation Time (seconds) for Different CP Methods with Different Calibration Set Size $n$ }
\label{tab:training_time}
\resizebox{0.7\textwidth}{!}{%
\begin{tabular}{ccccccc}
\toprule
 \makecell{Calibration \\Set Size $n$} & Sphere & \makecell{Sphere-ACI \\ ($\gamma=0.01$)}  & Square & \textit{MultiDimSPCI} & \makecell{\textbf{STACI} \\ ($\gamma=0$)} & \makecell{\textbf{STACI} \\ ($\gamma=0.01$)} \\
\midrule
100 &  $142$  &  $144$  &  $24$  &  $24$  &  $25$  &  $25$ \\
200 &  $152$  &  $153$  &  $24$  &  $24$  &  $26$  &  $25$ \\
300 &  $135$  &  $129$  &  $24$  &  $24$  &  $25$  &  $26$ \\
400 &  $132$  &  $135$  &  $25$  &  $24$  &  $24$  &  $25$ \\
500 &  $135$  &  $136$  &  $23$  &  $24$  &  $23$  &  $24$
 \\
\bottomrule
\end{tabular}%
}
\end{table}


\subsection{Method Details}

We provide pseudo-codes for our proposed method in \ref{alg:staci}. This applies all experiments in this paper, excluding offline setting detailed in Section \ref{sec: offline}.

\subsection{Hyperparameter Study }
\label{sec: hyperparaeter}

The studies for hyperparameter are presented in Figure~\ref{fig:minipage_grid}.
\begin{figure}[htbp]

    \centering

    \begin{minipage}[t]{0.23\textwidth}
        \centering
        \includegraphics[width=\linewidth]{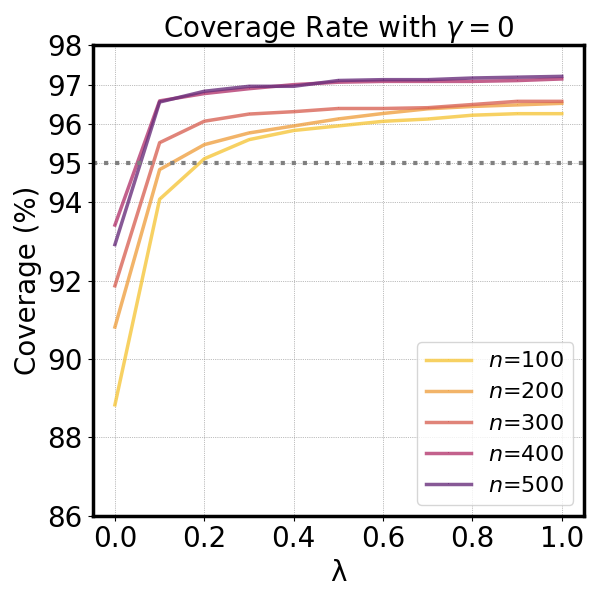}
    \end{minipage} \hfill
    \begin{minipage}[t]{0.23\textwidth}
        \centering
        \includegraphics[width=\linewidth]{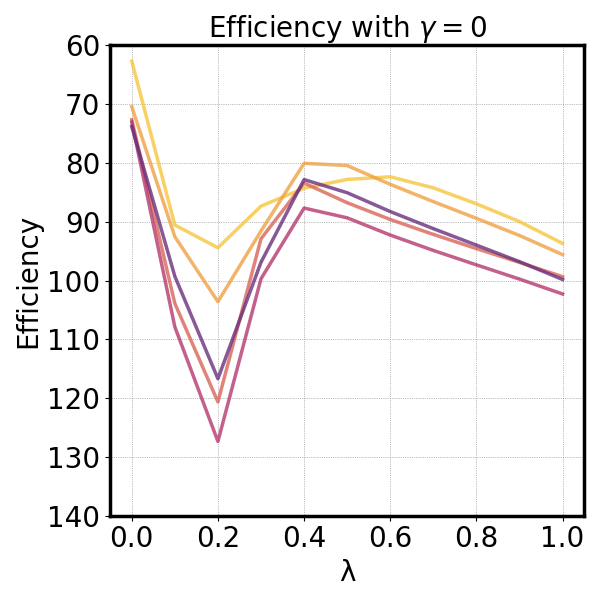}
    \end{minipage} \hfill
    \begin{minipage}[t]{0.23\textwidth}
        \centering
        \includegraphics[width=\linewidth]{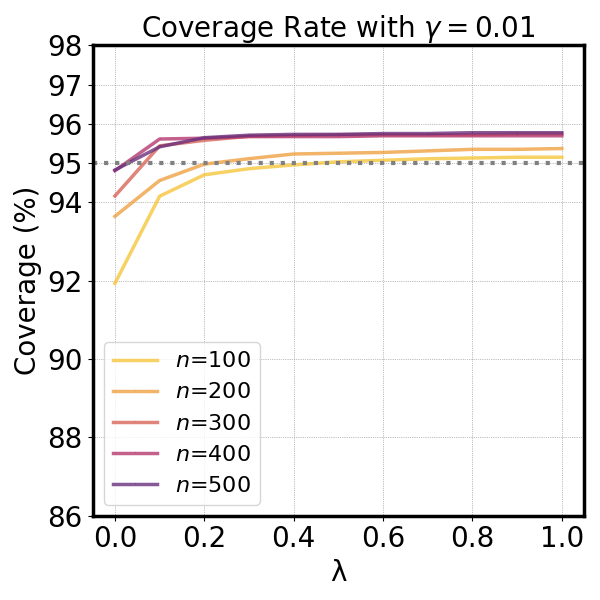}
    \end{minipage} \hfill
    \begin{minipage}[t]{0.23\textwidth}
        \centering
        \includegraphics[width=\linewidth]{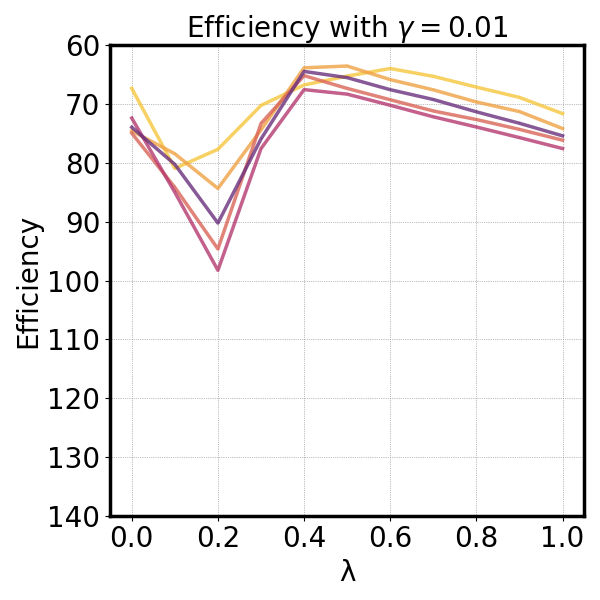}
    \end{minipage}

    \makebox[\textwidth][c]{\textbf{(1a)-(1d)}: AGCRN on PeMS-G1}

    \begin{minipage}[t]{0.23\textwidth}
        \centering
        \includegraphics[width=\linewidth]{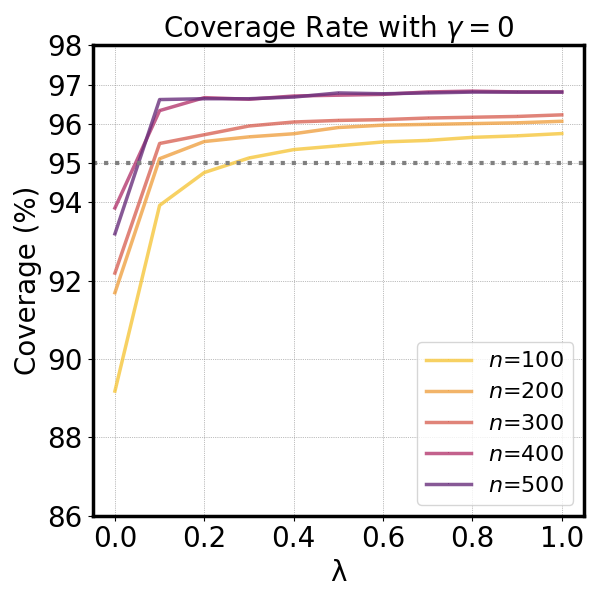}
    \end{minipage} \hfill
    \begin{minipage}[t]{0.23\textwidth}
        \centering
        \includegraphics[width=\linewidth]{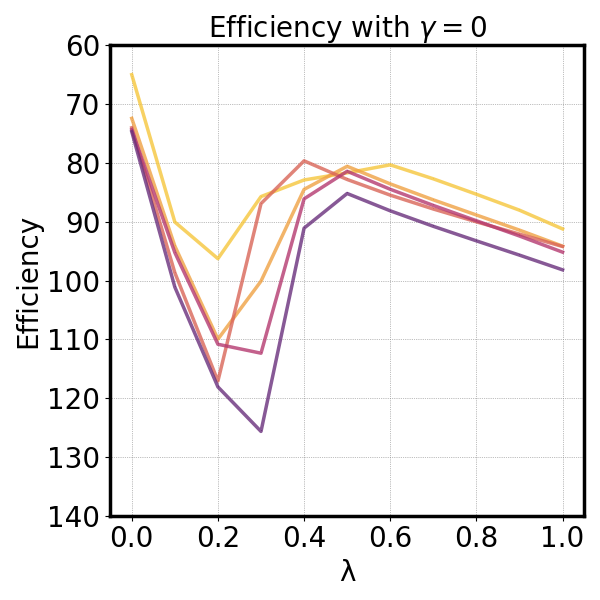}
    \end{minipage} \hfill
    \begin{minipage}[t]{0.23\textwidth}
        \centering
        \includegraphics[width=\linewidth]{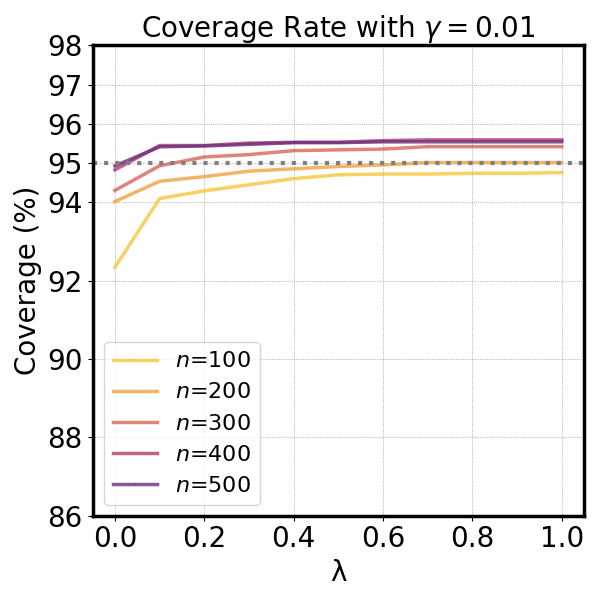}
    \end{minipage} \hfill
    \begin{minipage}[t]{0.23\textwidth}
        \centering
        \includegraphics[width=\linewidth]{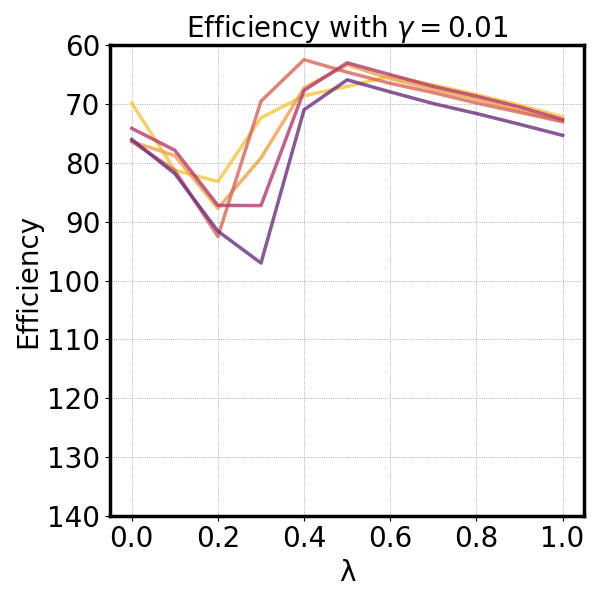}
    \end{minipage}

    \makebox[\textwidth][c]{\textbf{(2a)-(2d)}: ASTGCN on PeMS-G1}

    \begin{minipage}[t]{0.23\textwidth}
        \centering
        \includegraphics[width=\linewidth]{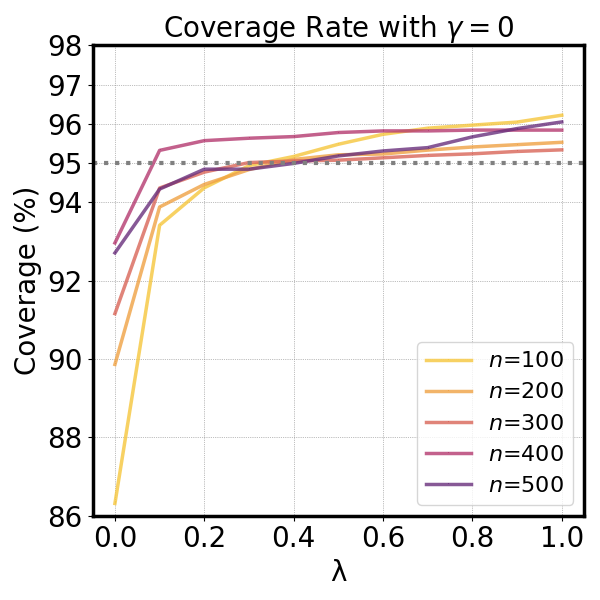}
    \end{minipage} \hfill
    \begin{minipage}[t]{0.23\textwidth}
        \centering
        \includegraphics[width=\linewidth]{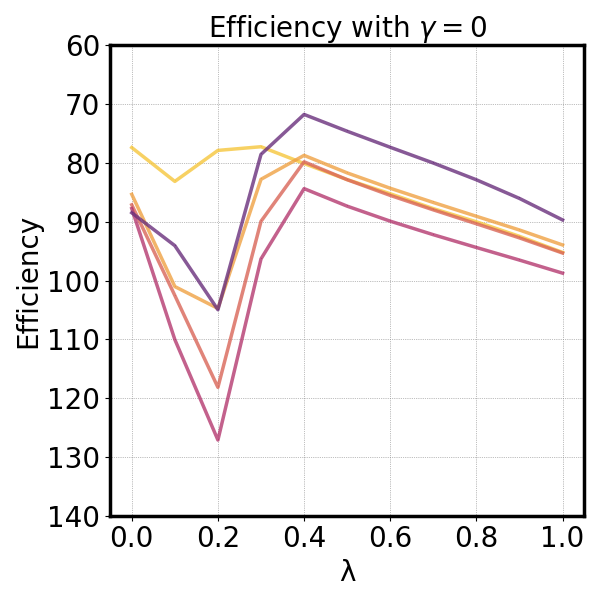}
    \end{minipage} \hfill
    \begin{minipage}[t]{0.23\textwidth}
        \centering
        \includegraphics[width=\linewidth]{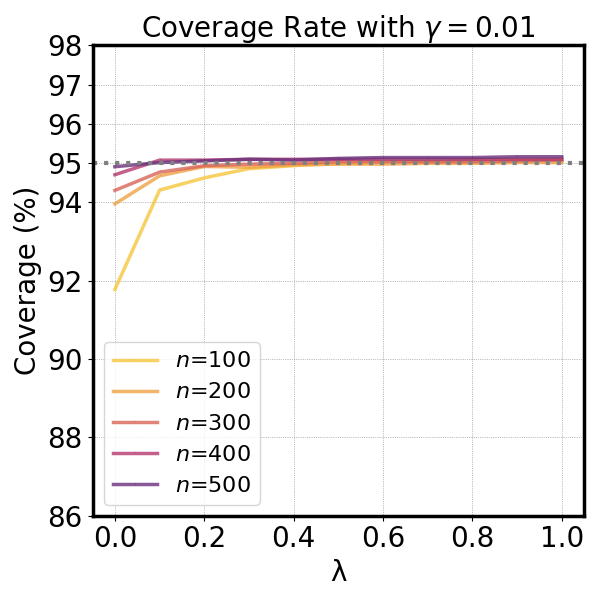}
    \end{minipage} \hfill
    \begin{minipage}[t]{0.23\textwidth}
        \centering
        \includegraphics[width=\linewidth]{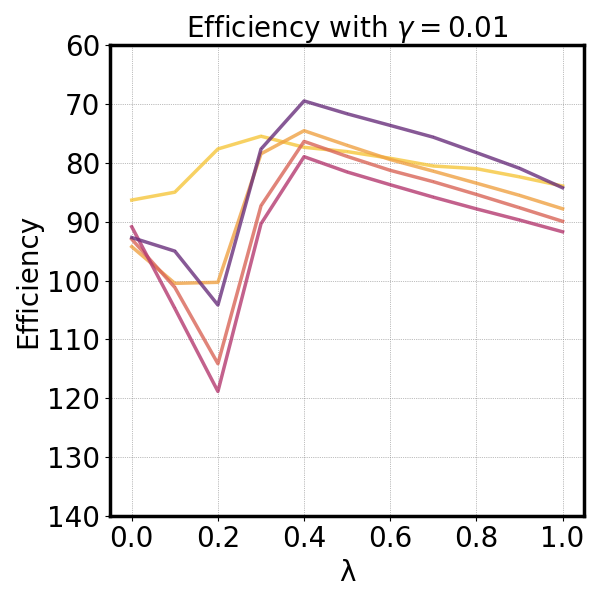}
    \end{minipage}

    \makebox[\textwidth][c]{\textbf{(3a)-(3d)}: STGODE on PeMS-G1}

    \begin{minipage}[t]{0.23\textwidth}
        \centering
        \includegraphics[width=\linewidth]{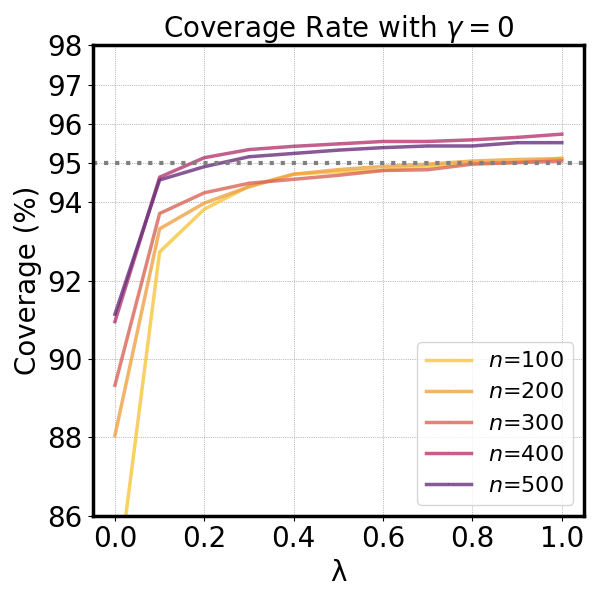}
    \end{minipage} \hfill
    \begin{minipage}[t]{0.23\textwidth}
        \centering
        \includegraphics[width=\linewidth]{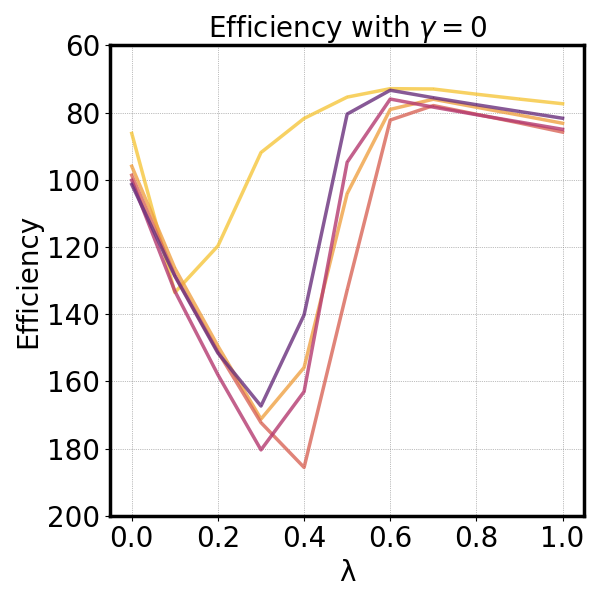}
    \end{minipage} \hfill
    \begin{minipage}[t]{0.23\textwidth}
        \centering
        \includegraphics[width=\linewidth]{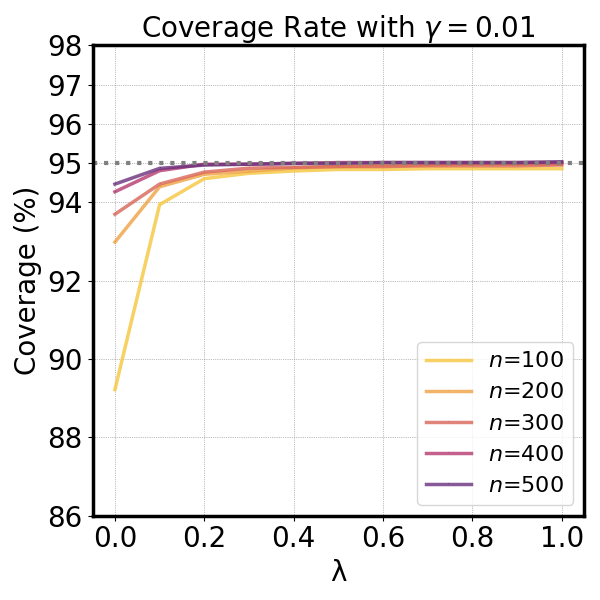}
    \end{minipage} \hfill
    \begin{minipage}[t]{0.23\textwidth}
        \centering
        \includegraphics[width=\linewidth]{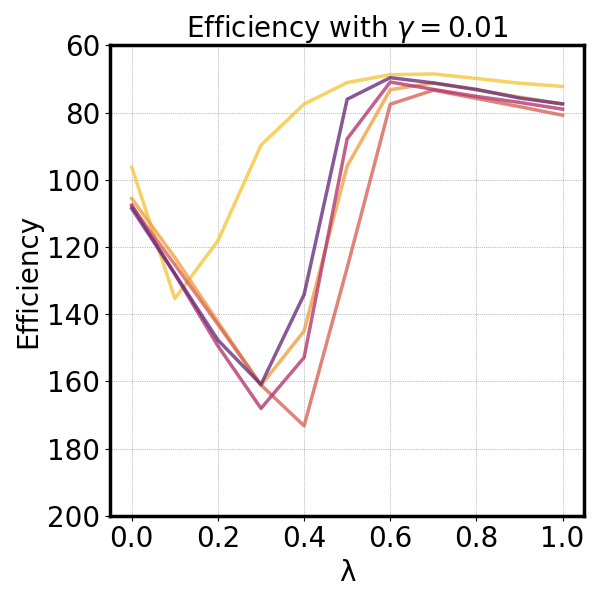}
    \end{minipage}

    \makebox[\textwidth][c]{\textbf{(4a)-(4d)}: AGCRN on PeMS-G2}

    \begin{minipage}[t]{0.23\textwidth}
        \centering
        \includegraphics[width=\linewidth]{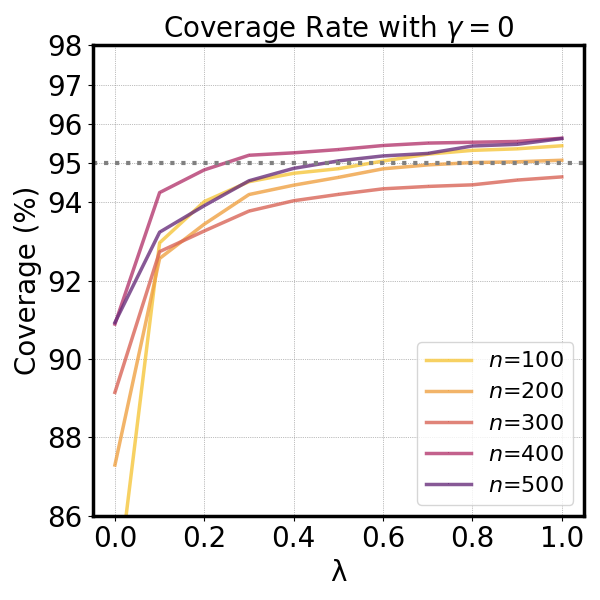}
    \end{minipage} \hfill
    \begin{minipage}[t]{0.23\textwidth}
        \centering
        \includegraphics[width=\linewidth]{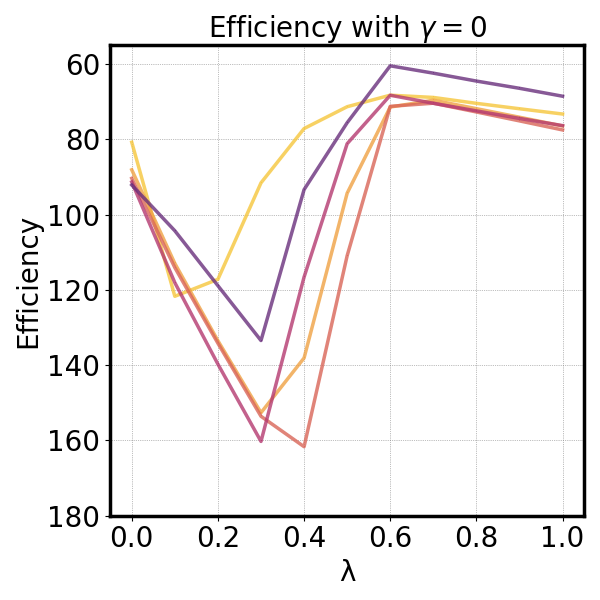}
    \end{minipage} \hfill
    \begin{minipage}[t]{0.23\textwidth}
        \centering
        \includegraphics[width=\linewidth]{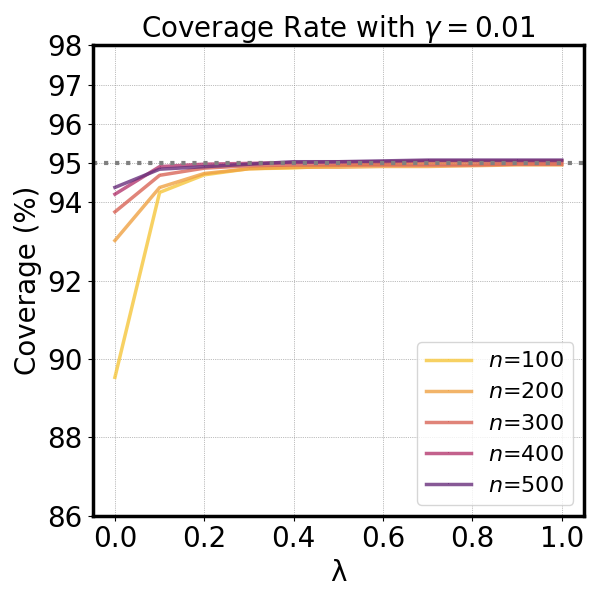}
    \end{minipage} \hfill
    \begin{minipage}[t]{0.23\textwidth}
        \centering
        \includegraphics[width=\linewidth]{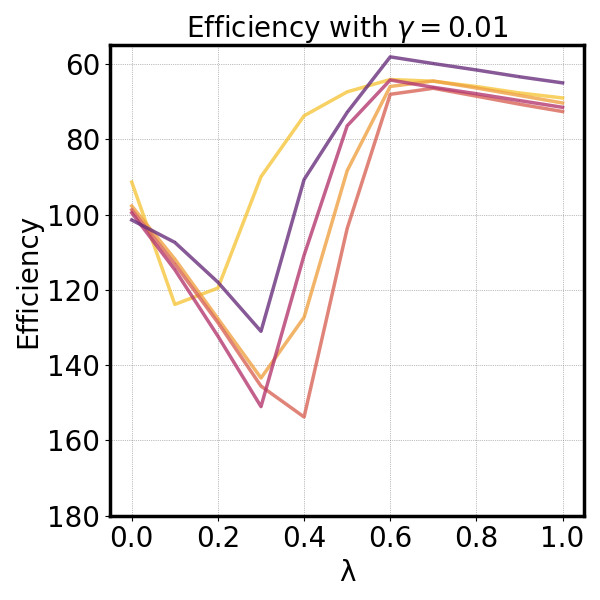}
    \end{minipage}

    \makebox[\textwidth][c]{\textbf{(5a)-(5d)}: ASTGCN on PeMS-G2}

    \begin{minipage}[t]{0.23\textwidth}
        \centering
        \includegraphics[width=\linewidth]{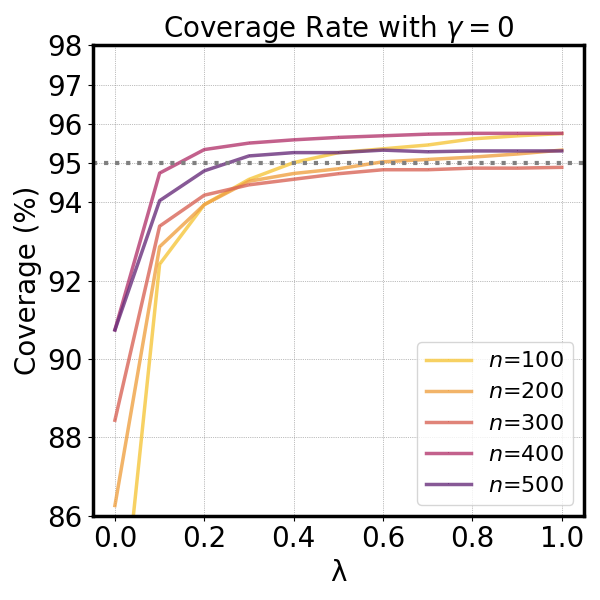}
    \end{minipage} \hfill
    \begin{minipage}[t]{0.23\textwidth}
        \centering
        \includegraphics[width=\linewidth]{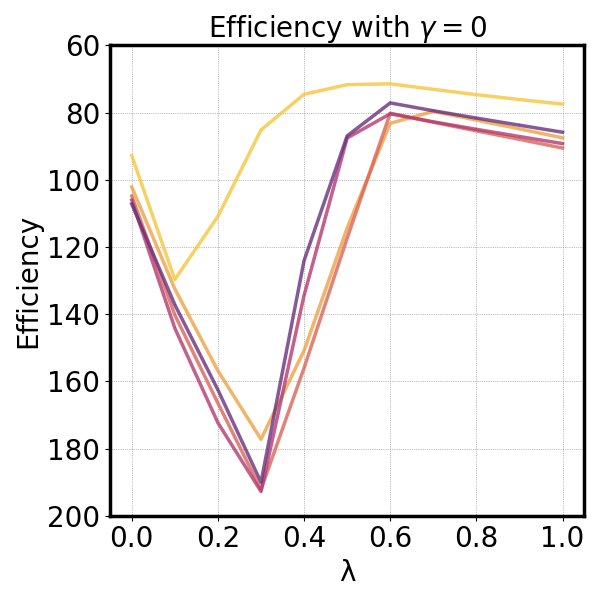}
    \end{minipage} \hfill
    \begin{minipage}[t]{0.23\textwidth}
        \centering
        \includegraphics[width=\linewidth]{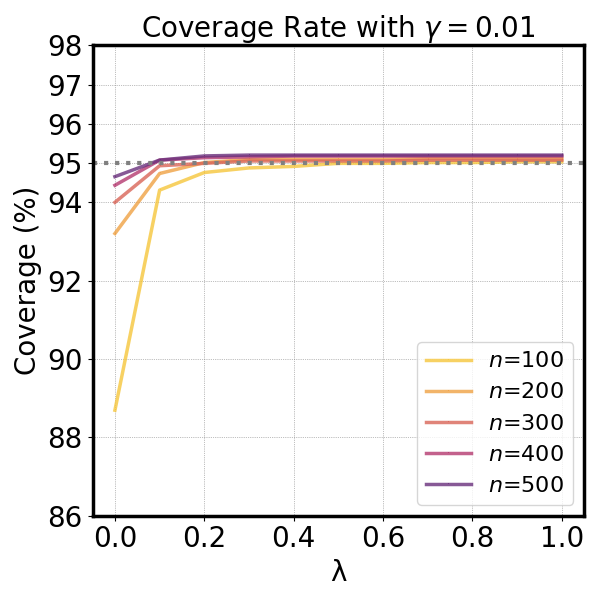}
    \end{minipage} \hfill
    \begin{minipage}[t]{0.23\textwidth}
        \centering
        \includegraphics[width=\linewidth]{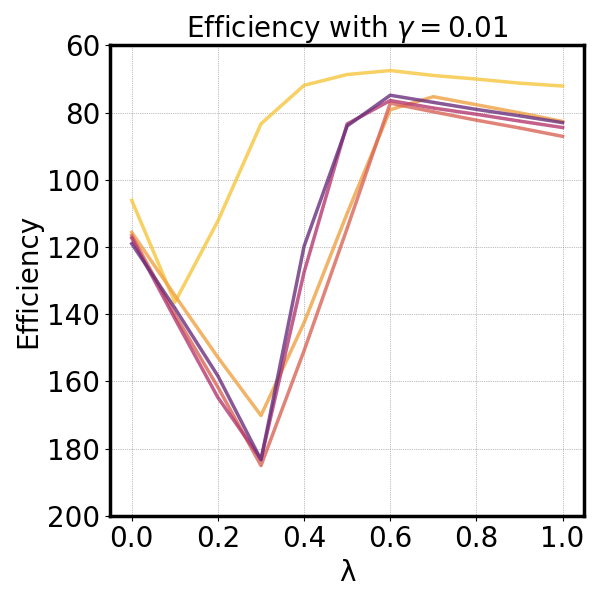}
    \end{minipage}

    \makebox[\textwidth][c]{\textbf{(6a)-(6d)}: STGODE on PeMS-G2}

    \caption{Comparison of Coverage and Efficiency for all PeMS experiments with different backbone GNN models and datasets. Each figure show results for different belief weight $\lambda$ and calibration set size $n$, with adaptive step size $\gamma = 0$ (a\&b) and $0.01$ (c\&d). Pre-determined coverage threshold of $95$\% is shown by horizontal gray dotted lines. }
    \label{fig:minipage_grid}
\end{figure}

\subsection{Additional Ablation Study: Offline Experiment}
\label{sec: offline}

In both synthetic and real-world data, MultiDimSPCI achieves the closest to our proposed method, \texttt{STACI}, in efficiency. Therefore, we focus our comparison on four specific variants: vanilla MultiDimSPCI($\gamma=0$), MultiDimSPCI($\gamma=0.01$), \texttt{STACI}($\gamma=0$), and \texttt{STACI}($\gamma=0.01$).
In the offline setting, \texttt{STACI} does not update the covariance matrix estimation. To ensure a fair comparison, we similarly fix the covariance matrix for MultiDimSPCI methods at the beginning of the test phase.

The results are illustrated in Figure \ref{fig: ablation}. As seen in the left figure, fixing the covariance matrix significantly improves the coverage rates of all methods, bringing them close to the desired 95\% level. However, despite having the same $\gamma$, \texttt{STACI} consistently outperforms MultiDimSPCI in efficiency.
Notably, when ACI is not applied ($\gamma=0$), both methods tend to be overly conservative, resulting in coverage rates well above the desired 95\%. Therefore, since \texttt{STACI} ($\gamma=0$) achieves a higher coverage rate, MultiDimSPCI ($\gamma=0.01$) and \texttt{STACI}($\gamma=0$) exhibit similar efficiency.

In conclusion, regardless of whether the covariance matrix is fixed or not, \texttt{STACI} consistently surpasses MultiDimSPCI in both coverage and efficiency. Furthermore, to achieve an exact coverage rate, incorporating ACI ($\gamma=0.01$) is recommended.

\begin{figure}[ht]
    \centering
    \includegraphics[width= 0.8\linewidth]{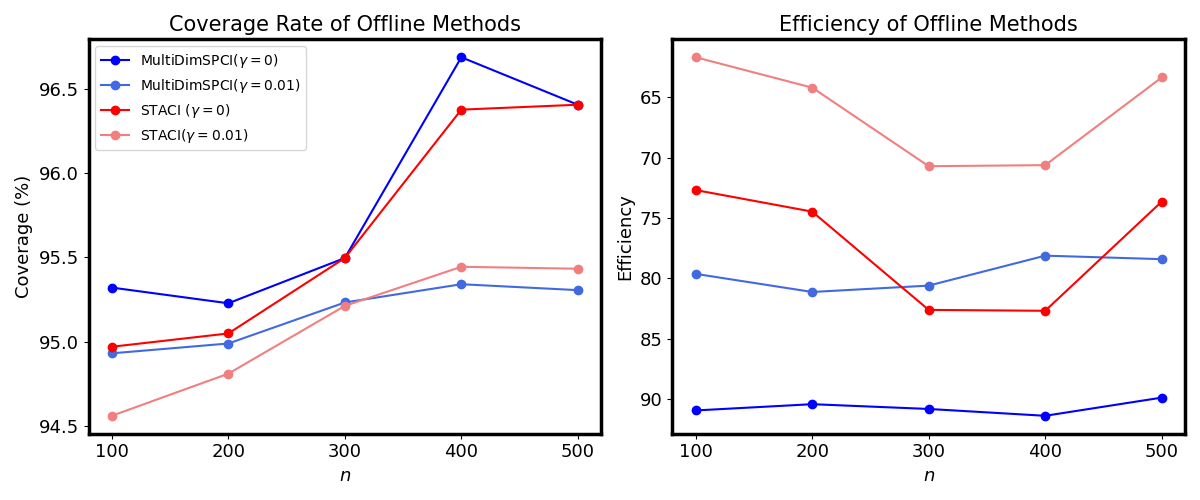} 
    \caption{Coverage and efficiency of different methods with different calibration set size $n$. MultiDimSPCI methods are in blue, and our methods are in red. Methods with $\gamma = 0$ are in darker colors; while those with adaptive coverage, $\gamma = 0.01$, are shown in shallow colors.} 
    \label{fig: ablation}
\end{figure}

\end{document}